\def\munderbar#1{\underline{\sbox\tw@{$#1$}\dp\tw@\z@\box\tw@}}
\newcommand{\norm}[1]{\left\lVert#1\right\rVert}
\newtheorem{theorem}{Theorem}
\newtheorem{remark}{Remark}
\newtheorem{definition}{Definition}
\newcommand{\uu}{\mathbf{u}}
\newcommand{\s}{\mathbf{s}}
\newcommand{\R}{\mathbb{R}}
\newcommand{\Z}{\mathbb{Z}}
\begin{document}
%
\title{Rethink Repeatable Measures of Robot Performance with Statistical Query}

\author{Bowen Weng$^{1}$, Linda Capito$^2$, Guillermo A. Castillo$^{3}$, and Dylan Khor$^{4}$
\thanks{$^{1}$Department of Computer Science, Iowa State University, Ames, IA, USA;  {\tt\footnotesize bweng\@iastate.edu.}}
\thanks{$^{2}$Transportation Research Center Inc., East Liberty OH 43319, USA ;  {\tt\footnotesize capitol@trcpg.com.}}
\thanks{$^{3}$Independent researcher; {\tt\footnotesize gacastillom7@gmail.com.}}
\thanks{$^{4}$Department of Computer Science, Iowa State University, Ames, IA, USA; {\tt\footnotesize dkhor@iastate.edu.}}%
}

\maketitle

\begin{abstract}
For a general standardized testing algorithm designed to evaluate a specific aspect of a robot's performance, several key expectations are commonly imposed. Beyond accuracy (i.e., closeness to a typically unknown ground-truth reference) and efficiency (i.e., feasibility within acceptable testing costs and equipment constraints), one particularly important attribute is \emph{repeatability}. Repeatability refers to the ability to consistently obtain the same testing outcome when similar testing algorithms are executed on the same subject robot by different stakeholders, across different times or locations. 
However, achieving repeatable testing has become increasingly challenging as the components involved—testing algorithms, robotic platforms, measurement apparatuses, testing tasks, and environmental contexts—grow more complex, intelligent, diverse, and, most importantly, stochastic. While related efforts have addressed repeatability at ethical, hardware, and procedural levels, this study focuses specifically on achieving repeatable testing at the \emph{algorithmic} level.
Specifically, we target the most commonly adopted class of testing algorithms in standardized evaluation: statistical query (SQ) algorithms (i.e., algorithms that estimate the expected value of a bounded function over a distribution using sampled data). We propose a lightweight, parameterized, and adaptive modification applicable to any SQ routine—whether based on Monte Carlo sampling, importance sampling, or adaptive importance sampling—that makes it provably repeatable, with guaranteed bounds on both accuracy and efficiency.
We demonstrate the effectiveness of the proposed approach across three representative scenarios: (i) established and widely adopted standardized testing of manipulators, (ii) emerging intelligent testing algorithms for operational risk assessment in automated vehicles, and (iii) developing use cases involving command tracking performance evaluation of humanoid robots in locomotion tasks.

\end{abstract}


%
\IEEEpeerreviewmaketitle

\section{Introduction}\label{sec:intro}
The study of \emph{accuracy} and \emph{repeatability} has been a foundational aspect of robotics research since its inception~\cite{pieper1969kinematics,roth1976performance,sheridan1976performance,mooring1986determination,mooring1987aspects,todd2012fundamentals}. Early investigations into these concepts date back to the 1970s, initially within the context of industrial manipulators. Significant formalization of these measures emerged through international standards~\cite{dagalakis1999industrial} such as ISO 9283~\cite{iso9283}, first introduced in 1990, and ANSI/RIA R15.05~\cite{ansi_ria_r1505,jeswiet1995measuring}. Together, these standards provided comprehensive benchmarks that have globally shaped rigorous evaluation practices in industrial robotic manipulation. More recently, evaluation methodologies have evolved to encompass broader robotic domains beyond manipulation, extending into \emph{mobility} performance, as exemplified by standards such as ISO 18646~\cite{iso18646}, EURO New Car Assessment Program (NCAP) related testing methods for vehicles~\cite{van2017euro,euroncap2023aeb,rao2019tja}, and some undergoing efforts with testing methods and apparatus on mobile manipulators~\cite{aboul2022performance}.

At a high level, accuracy is defined as the closeness of agreement between a measured value and the true (reference) value. This definition is consistent with international standards such as ISO 5725-1:1994~\cite{ISO5725-1} and is widely adopted across metrology and robotics domains~\cite{NISTGlossary}. In contrast, repeatability refers to the degree of agreement between independent test results obtained under repeatability conditions. It is most commonly quantified either as a standard deviation of repeated results or as a repeatability limit (e.g., the interval within which 95\% of repeated measurements are expected to fall).

\begin{remark}\label{rmk:rp}
    Note that some early studies on accuracy and repeatability in robotics~\cite{pieper1969kinematics,roth1976performance,sheridan1976performance} characterize these attributes as inherent properties of the subject robots rather than of the testing algorithms themselves. While the underlying methodology can transfer between these perspectives, the primary focus of this paper is on the repeatability of the testing algorithms—that is, the ability to produce (almost) identical and accurate performance estimates for the same test subject, even when executed by different stakeholders under similar conditions.
\end{remark}

\begin{figure*}
    \centering
    \begin{subfigure}{.54\textwidth}
      \centering
      \includegraphics[trim={2cm 0cm 4cm 0cm}, clip, width=0.99\textwidth]{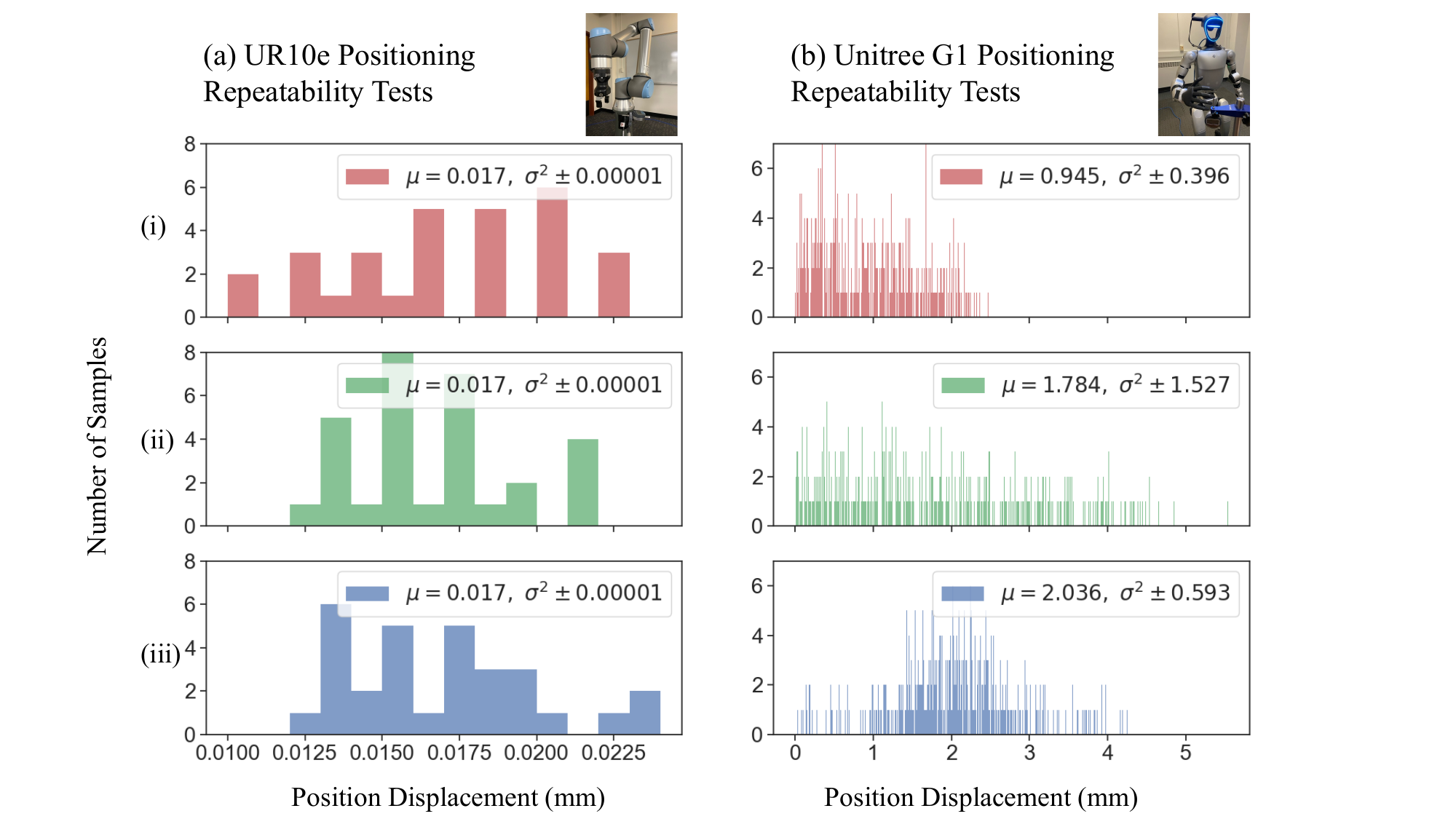}
      \caption{Applying the manipulation positioning repeatability testing procedure specified by ISO 9283 to two distinctly different robotic systems—a robotic manipulator (UR10e from Universal Robotics, shown in (a)) and a humanoid robot (G1 from Unitree Robotics, shown in (b))—each test was independently repeated three times (results illustrated in red, green, and blue in subfigures (i), (ii), and (iii), respectively). The testing procedure and testing equipment (a digital dial indicator with 0.001 mm resolution and a measuring range of 12.7 mm) remain consistent across the two robots and all tests with one notable exception: the UR10e manipulator undergoes 30 samples per trial (within the recommended ISO 9283 test range), while the Unitree G1 undergoes 250 samples per trial (five times the upper bound suggested by ISO 9283). The average displacement ($\mu$) and standard deviation ($\sigma^2$) for each trial and each robot are reported in their corresponding subfigures. The UR10e's algorithm and software are part of the commercial stack. The humanoid robot's control is developed in-house involving impedan ce whole-body control for in-place manipulation.}
      \label{fig:iso9283_np}
    \end{subfigure}%
    \hfill
    \begin{subfigure}{.43\textwidth}
      \centering
      \includegraphics[trim={5cm 0cm 7cm 0cm},clip,width=.99\textwidth]{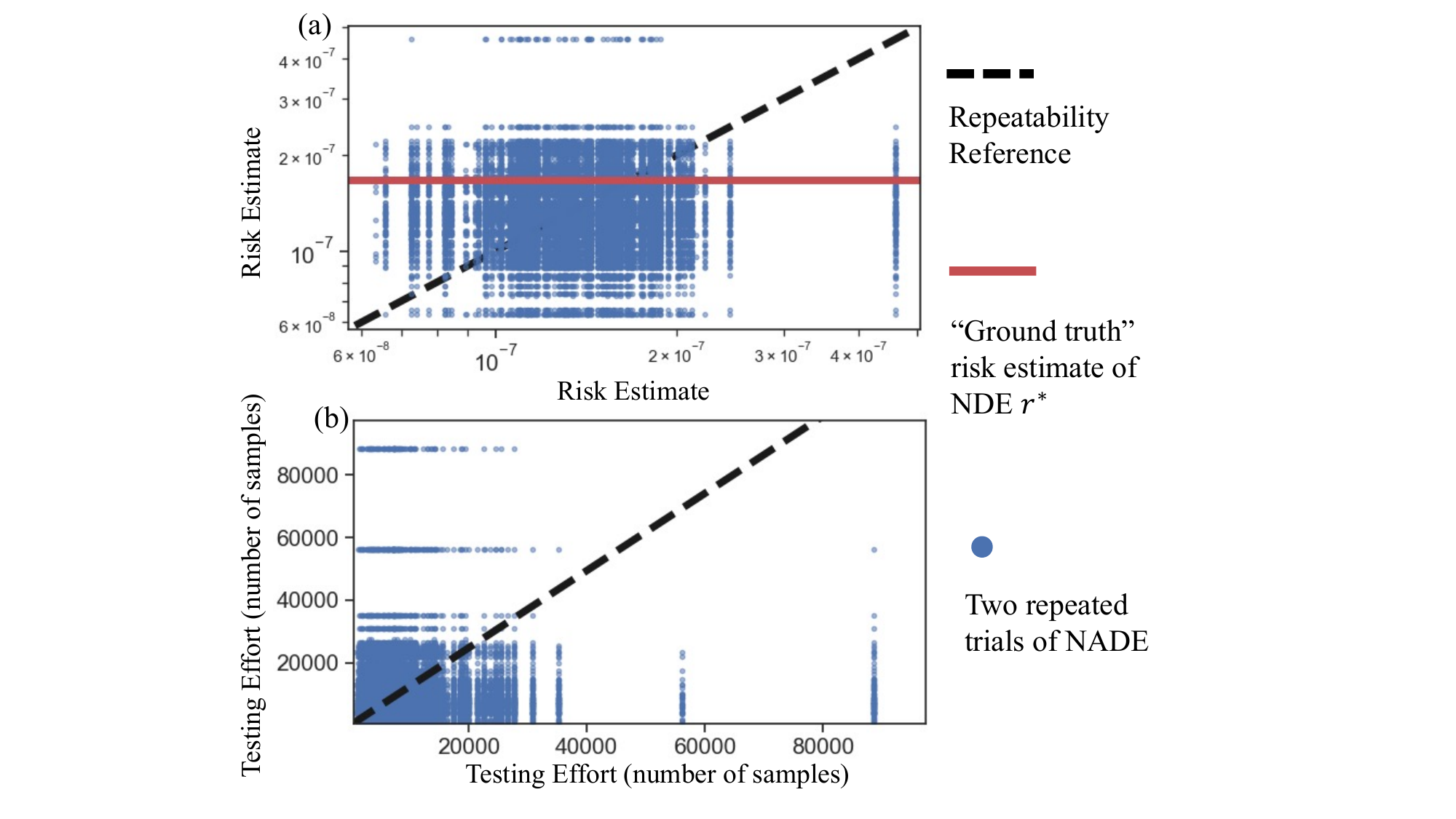}
      \caption{Each blue dot represents a pair of independent trials of executing NADE, an importance sampling inspired ADS risk assessment algorithm~\cite{feng2021intelligent}, against a pre-trained ADS. The execution of NADE strictly follows the open-source code~\cite{nadegithub}. The same results were also reported in Capito et al.~\cite{capito2024repeatable}, but not in the pairwise format. Subfigure (a) (top row) indicates the risk estimates obtained from two independent trials. Subfigure (b) (bottom row) indicates the number of samples each trial takes to converge to the desired relative half-width ($0.03$) with 0.05 confidence interval. The dark dashed line in each subfigure denotes the repeatability reference (i.e., two independent trials are having identical risk estimate if the blue dot overlaps the dashed line). For risk estimate, a ``ground truth" approximation obtained from extensive tests in NDE (Naturalistic Driving Environment) is also denoted in red.}
      \label{fig:nade_np}
    \end{subfigure}%
    \hfill
    \caption{Examples of \emph{non-repeatability} in the practice of robot performance testing. Notably, the issues highlighted by both of these examples are effectively addressed by the proposed method within their respective application domains, as will be demonstrated in Section~\ref{sec:exp}.}
    \label{fig:np}
    \vspace{-5mm}
\end{figure*}

Despite substantial empirical progress in the past decades, critiques have emerged regarding conventional definitions of repeatability in standards and testing. Traditional definitions implicitly assume that repeatability is equivalent to low variance under tightly controlled conditions. However, this assumption often does not hold, especially for physically more complicated and algorithmic-wise more intelligent robotic systems operating under dynamic or uncertain conditions. Consider the example illustrated in Fig.~\ref{fig:iso9283_np}, which adapts the standard positioning repeatability testing procedures from ISO 9283~\cite{iso9283} to two markedly distinct robotic systems: a robotic manipulator (Universal Robots UR10e equipped with a Robotiq e-gripper as its end-effector) and a humanoid robot (Unitree Robotics G1). Upon executing the complete testing procedure through three independent trials, the UR10e demonstrates remarkably consistent results, achieving an average positioning displacement of $0.017\text{ mm}$ with negligible variance (consistent up to the fifth decimal place) across all trials. In contrast, the humanoid robot G1 exhibits substantially varying average displacements and notable differences in variance among the three trials (as noted in Remark~\ref{rmk:rp}, the observed non-repeatability may stem in part from the robot’s own limitations (e.g., the G1's capabilities), but it also highlights the lack of repeatability in the testing procedure itself—adapted from ISO 9283—when applied to this particular application).

Moreover, the notion that repeatability is completely decoupled from accuracy is an oversimplification, and arguably misleading in the context of real robotic systems. While traditional definitions treat them as distinct (and orthogonal), in practice they are deeply interrelated. Shiakolas et al.~\cite{shiakolas2002accuracy}, for instance, constructed an error budget tree for industrial manipulators and introduced a metric called ``degree of influence" to quantify how each kinematic parameter variation contributes to both accuracy and repeatability. They found that manufacturing and link alignment errors impact the robot’s bias and variability simultaneously, and these factors must be analyzed together. Likewise, improving accuracy via calibration or control can sometimes trade off with repeatability: for example, a vision-guided correction system might reduce bias but introduce trial-to-trial noise. While classical metrics assume we can separately ``calibrate out" bias, in reality the line between bias and variance blurs once the robot is interacting with a non-static world.

From a theoretical standpoint, the statistical assumptions underlying repeatability metrics have drawn significant critique. Standard repeatability tests presume each trial to be an independent, identically distributed sample from a stationary error distribution. Brethé~\cite{brethe2010intrinsic} argues that this assumption is often invalid in practice, as real robotic systems commonly exhibit drift and non-stationary behaviors over repeated motions. Additionally, traditional repeatability metrics, such as those defined by ISO 9283, summarize errors with a single numeric value—typically related to the radius of a sphere encompassing 95\% of measured points. Researchers have noted that this simplification implicitly assumes an isotropic Gaussian distribution, which frequently does not reflect reality\cite{brethe2007risk}. Finally, when determining the appropriate testing effort, particularly the number of tests required to achieve reliable evaluation, traditional statistical significance-based methods exhibit inherent limitations. Such techniques typically rely heavily on assumptions about underlying data distributions and variance stability, rendering them sensitive and potentially misleading when those assumptions do not hold in practice~\cite{gat1995towards,montgomery2017introduction, brethe2010intrinsic}. Moreover, conventional ``online" approaches, notably those based on metrics like relative half-width (RHW)~\cite{feng2021intelligent,zhao2016accelerated}, have also been criticized for their reliability and robustness issues, particularly in adaptive or non-stationary testing environments~\cite{law2014simulation}, on the one hand, as they ``only guarantees probabilistic accuracy of the estimate, not repeatability of the testing outcome"~\cite{capito2024repeatable}. As shown in Fig.~\ref{fig:nade_np}—and similarly observed in Capito et al.\cite{capito2024repeatable}, albeit in a different format, an accelerated testing algorithm known as the Naturalistic Adversarial Driving Environment (NADE)~\cite{feng2021intelligent}, implemented using the original codebase provided by the authors~\cite{nadegithub}, was executed repeatedly in a pairwise fashion. The results revealed significantly divergent outcomes across trials, highlighting a notable lack of repeatability. On the other hand, this is especially relevant for contemporary robotic systems where testing conditions often evolve, data distributions change dynamically, and assumptions of independence and identical distribution (i.i.d.) are routinely violated. Consequently, such methods may underestimate the required testing effort, posing challenges to accurately quantify performance and reliability in real-world robotic scenarios~\cite{brethe2010intrinsic}.

As the broader robotics community transitions toward task-relative and process-oriented performance metrics~\cite{feng2023dense,kastner2022arena,aboul2022performance,sferrazza2024humanoidbench}, aiming to evaluate and understand robot behavior in conditions that genuinely reflect real-world usage rather than idealized spec-sheet scenarios, both the robotic systems under test and the testing tasks themselves are becoming increasingly complex, diverse, and intelligent. Within the research community, the predominant approach for replicable robotics has focused on adopting standardized hardware and software (the so-called ``one design'' approach~\cite{messina2019your}) and utilizing common benchmarks with relatively limited attention in addressing replicability at the algorithmic level, which still causes significant issues as particularly revealed in Fig.~\ref{fig:np}. Among standards bodies and regulatory entities for robotic products, facing extreme restrictions on feasible testing effort and cost~\cite{liu2022curse,euroncap2023aeb,forkenbrock2015nhtsa}, achieving statistical repeatability has always been challenging and is becoming increasingly difficult as the subject robots become increasingly intelligent, diverse, and stochastic.


\subsection{Literature Review}
While the above challenges of non-repeatability are primarily discussed in the context of standardized testing for robots, related issues are also explored in other fields, including machine learning~\cite{pineau2021improving,impagliazzo2022reproducibility,semmelrock2025reproducibility}, scientific discovery~\cite{mcnutt2014reproducibility,begley2015reproducibility,national2019reproducibility}, biology~\cite{errington2014open,voelkl2020reproducibility}, and psychology~\cite{bell2009repeatability}—often under different terminologies such as reproducibility, replicability, and pseudodeterminis (most notably in randomized algorithms~\cite{gat2011probabilistic,goldreich2019multi}). For example, the notion of repeatable testing algorithms closely parallels the broader scientific concern of ``Reproducibility and Replicability in Science," a topic that has received global attention and institutional support, notably from the National Science Foundation (NSF)~\cite{national2019reproducibility} at U.S.

In machine learning—particularly within computational theory~\cite{bun2020equivalence} and optimization—studies on reproducibility and replicability have largely focused on PAC (Probably Approximately Correct) learning frameworks~\cite{dixon2023list,vander2024replicability}, supervised learning methods, and optimization-based algorithms such as gradient descent~\cite{impagliazzo2022reproducibility}, convex optimization, and reinforcement learning convergence~\cite{esfandiarireplicable,karbasi2023replicability}. These works commonly adopt a definition of repeatability aligned with the one introduced later in Section~\ref{sec:prob}, and often employ geometric partitions~\cite{impagliazzo2022reproducibility,chase2023replicability,dixon2023list,chase2024local} to formalize the repeatability constraint. Our proposed use of $\alpha$-partitioning (see Section~\ref{sec:main}) builds upon this tradition but is tailored specifically for the testing applications addressed in this paper.

Moreover, existing analyses tend to prioritize complexity bounds, i.e., determining the number of samples or tests required, while giving less attention to efficiency, such as minimizing resource consumption or runtime costs. Research on reproducibility and replicability has also notably underexplored the Statistical Query (SQ) model, despite its central role in standardized, sampling-based evaluation pipelines. The few studies that do examine SQ reproducibility~\cite{impagliazzo2022reproducibility} typically focus on simplified Monte Carlo contexts and do not extend to more advanced or adaptive SQ mechanisms. This leaves a significant gap for further exploration into methods that simultaneously address accuracy, repeatability, and efficiency within the SQ framework.

Within the robotics field, studies on repeatability primarily focused on characterizing it as an inherent property of robotic manipulators—serving to complement other performance attributes such as accuracy~\cite{pieper1969kinematics,roth1976performance,sheridan1976performance,mooring1986determination,conrad2000robotic}. In contrast, the repeatability of specific testing algorithms (which is the central focus of this paper) has received comparatively limited attention. To the best of our knowledge, only a few recent efforts, including the work by Capito et al.~\cite{capito2024repeatable}, have explored this direction. However, their study is constrained to a specific class of SQ using importance sampling under known distributions. As a result, it does not address key challenges present in real-world robotic systems, such as performance drift, nor does it incorporate broader testing methodologies like adaptive sampling.

In other scientific disciplines such as biology~\cite{errington2014open,voelkl2020reproducibility}, psychology~\cite{bell2009repeatability}, and related fields, behavioral repeatability has also been extensively studied. At the algorithmic level, its definition closely parallels that of robotic repeatability discussed earlier in the context of manipulation tasks. Consequently, these domains encounter analogous issues and challenges, including variability across trials, sensitivity to initial conditions, and the influence of unmodeled external factors.

A final remark on related literature highlights that the study of repeatability, more broadly encompassing reproducibility and replicability, extends well beyond algorithmic considerations. It includes critical dimensions such as research ethics, standardized hardware platforms~\cite{messina2019your,yang2019replab}, software benchmarking frameworks~\cite{messina2019your,bonsignorio2015toward}, and community-driven practices for experimental documentation and data sharing~\cite{gunes2022reproducibility,falco2015grasping,faragasso2023reproducibility,sorbara2015testing,ullman2021challenges}. While these perspectives are not the primary focus of this paper, they are highly complementary to our proposed methodological framework. Importantly, as illustrated in Fig.\ref{fig:np}, even when established replicability practices are followed with consistent hardware and open-source software benchmark, repeatable testing outcomes are still not guaranteed due to inherent randomness and variability in system-environment interactions.

\subsection{Main contributions}
Motivated by these challenges, and inspired by a diverged set of methods from Lyapunov stability, concentration inequality, and geometric methods, this paper introduces a novel paradigm for achieving repeatably accurate performance measurement of robotic systems, particularly within the context of standardized testing procedures using SQ-type algorithms.

We propose \textbf{redefining repeatability} through the lens of stability, explicitly highlighting its theoretical trade-off with accuracy. A further key advantage of this approach is that it eliminates the need to repeatedly execute an identical testing algorithm (which is typically required by previous repeatable or replicable algorithms~\cite{capito2024repeatable}). Instead, as long as specified accuracy properties are consistently satisfied, repeatable testing outcomes can be reliably achieved, even when testing algorithms or procedures vary. This perspective naturally accommodates adaptive testing algorithms and accounts for performance drift or non-stationary behaviors in robotic systems, as discussed previously in the literature, thereby offering enhanced flexibility in evaluating increasingly diverse and intelligent robotic systems.

Furthermore, we develop \textbf{theoretical guarantees} that ensure effective and efficient testing procedures to reliably achieve the desired repeatable and accurate outcomes. Specifically, by incorporating the empirical Bernstein inequality~\cite{maurer2009empirical} into our analysis, we significantly reduce the required testing effort. 

Finally, our proposed algorithms and theoretical properties are \textbf{empirically demonstrated} across a variety of standardizable robot performance testing tasks, including risk assessment for automated vehicles, manipulation repeatability testing for robotic manipulators and humanoid robots, and trajectory or command-tracking capabilities of locomotive legged robots. This diverse range of applications highlights the compatibility and versatility of our proposed methods, demonstrating their effectiveness in complementing mature, widely-adopted testing standards, as well as in contexts where testing methodologies are currently underdeveloped or yet to be established. 

Our framework therefore addresses two complementary objectives: (i) advancing replicable robotics research through provably repeatable algorithms, and (ii) supporting standards organizations and regulatory bodies with practical tools for consistent performance evaluation.


\section{Preliminaries and Problem Formulation}\label{sec:prob}
For notations and nomenclature of this paper, one can refer to Appendix~\ref{apx:notation}.

\subsection{The testing system}
The general practice of testing a robot can be formulated as a dynamic control system
\begin{equation}\label{eq:ctrl-sys}
    \s(t+1) = f(\s(t), \uu(t); \omega(t)).
\end{equation}
The state $\s \in S \subset \R^s$ characterizes all observable properties of the testing participants and environmental conditions, such as the measured end-effector's positions in Fig.~\ref{fig:iso9283_np}, and the vehicles' positions and velocities involved in the tests revealed by Fig.~\ref{fig:nade_np}. The testing action $\uu \in U \subset \R^u$ denotes the controllable inputs of the testing system, such as the desired end-effector starting pos for each repeatably motion evaluation in Fig.~\ref{fig:iso9283_np}, and the driving behaviors of other traffic vehicles in the vehicle collision avoidance tests revealed by Fig.~\ref{fig:nade_np}. It is important to note that the ``action" does not directly correspond to the robot or the subject being tested; instead, it is encapsulated within the system dynamics $f$ (a detailed derivation can be found in \cite{weng2023comparability}). Additionally, $\omega \in W$ captures disturbances and uncertainties. This includes environmental randomness (e.g., wind speed or road friction) and algorithmic randomness, such as when the subject robot utilizes stochastic control algorithms.

Given any initial state $\s(0) \in S$ drawn from a probability distribution $p_s$ over the state space $S$, the \emph{testing} process generates a trajectory of states $T \subset S^{\xi}$ according to the testing system dynamics~\eqref{eq:ctrl-sys}. The trajectory begins at $\s(0)$ and propagates through a finite sequence of $\xi$ actions $\bar{\uu} = \{\uu(t)\}_{t \in \mathbb{Z}_{\xi}}$.

If these actions are predetermined and independent of the evolving state, they are termed \emph{open-loop} testing. Specifically, open-loop actions are sampled from a distribution $p_u$ over the action space $U^{\xi}$. Due to the independence between state and action samples, the joint distribution can be expressed as $p(x) = p(\s_0, \bar{\uu}) = p_s(\s_0) \cdot p_u(\bar{\uu})$ for some initial state $\s_0$ and action sequence $\bar{\uu}$.

Conversely, \emph{feedback} testing utilizes a testing policy $\pi$, where actions explicitly depend on the current state through $\uu(t) = \pi(\s(t))$. Integrating the feedback policy $\pi$—whether deterministic or stochastic—into the testing dynamics described by~\eqref{eq:ctrl-sys} implies that any uncertainties inherent in the policy itself are naturally encapsulated within the overall uncertainty term $\omega$, affecting the distribution of system trajectories accordingly.



\subsection{Performance estimate through statistical query}
A Statistical Query (SQ)~\cite{feldman2017general} is a query about an underlying data distribution that returns an estimate of the expectation of some bounded function with respect to that distribution. 

In the remainder of this paper, consider a testing algorithm for performance estimation $\mathcal{TE}$ as SQ, The underlying data distribution about which the algorithm is querying is $p$. The bounded function involved is a composition of the testing action propagation, trajectory collection, and single-trajectory evaluation. Given a collected set of state trajectories $T \subset S^{\xi}$, let $\psi: X \rightarrow \mathcal{M} \subset \R$ be the performance evaluation function, where $\mathcal{M} = [\underline{m}, \overline{m}]$ and $m = \overline{m} - \underline{m}$.

The ``ground-truth" of such a query is
\begin{equation}\label{eq:rstar}
    r^* = \mathbb{E}_{x \sim p}[\psi(x)]
\end{equation}
The most common interpretation of the queried outcome above is risk, particularly when $\psi$ is a Boolean-valued function indicating failure or non-failure. However, many other performance measures are also used in practice, depending on the testing objective and domain context.

Moreover, extended variants of SQ may not strictly sample from the original distribution $p$, but instead query under an alternative distribution $q$ defined over the same sample space. This is particularly useful in scenarios where the event of interest is sparse or rare~\cite{liu2022curse}, as a well-designed importance distribution $q$ can significantly accelerate the querying process and reduce the estimation variance~\cite{chatterjee2018sample}. This takes the form of 
\begin{equation}\label{eq:is}
    r_n = \frac{1}{n} \sum_{i=1}^n \psi(x_i)\frac{p(x_i)}{q(x_i)},
\end{equation}
with $x_1, \ldots, x_n \sim q(x)$.

It is immediate that when $p = q$,~\eqref{eq:is} reduces to the standard Monte Carlo estimator. As a further generalization, the samples may not be drawn from a fixed distribution $q$ throughout the querying process. Instead, one may update $q$ based on data-driven feedback—an approach known as adaptive importance sampling (AIS)~\cite{oh1992adaptive,bugallo2017adaptive,zhu2016gradient}. In some settings, the feedback testing policy (if applicable) can also be updated in an adaptive learning fashion to further improve efficiency or target rare events~\cite{feng2023dense,capito2020modeled}. Without loss of generality, this can be formulated as
\begin{equation}\label{eq:ais}
r_n = \frac{1}{n} \sum_{i=1}^n \psi(x_i)\frac{p(x_i)}{q_i(x_i)},
\end{equation}
where each sample $x_i \sim q_i(x)$. If $q_i = q$ for all $i$, then~\eqref{eq:ais} reduces to the standard importance sampling form in ~\eqref{eq:is}.

In the remainder of this paper, we also assume that the importance weight satisfies $w_i(x) = \frac{p(x)}{q_i(x)} \leq \overline{w}$ for all $x$ and all $i$ (with the subscript $i$ occasionally omitted for simplicity). This boundedness assumption is practically valid for all experiments considered in this study. We further note that, while boundedness is treated as an assumption in this paper, it can in fact be enforced in practice without compromising accuracy, for example, through techniques such as truncated importance sampling~\cite{ionides2008truncated}. A detailed treatment of such methods is beyond the scope of this paper.

Finally, for any testing algorithm that includes an SQ routine similar to~\eqref{eq:is} or~\eqref{eq:ais}, termination is governed by a criterion $\mathcal{T}: S^{\Z \times \xi} \rightarrow \mathbb{B}$, which maps the collection of all states from all tests to a Boolean decision. In this paper, the termination condition is designed to reflect statistical significance, ensuring that the results meet the desired levels of accuracy and reliability.

The above procedure are summarized in Algorithm~\ref{alg:is_testing}. For simplicity, we assume a fixed distribution $q$, as adapting $q$ introduces various alternatives—some of which will be explored later in Section~\ref{sec:exp}. Importantly, the theoretical results presented in this paper hold for both fixed and adaptive choices of $q$, which constitutes one of the key advantages of the proposed approach.

Intuitively, if the termination condition is removed and the algorithm is allowed to run indefinitely—or for a sufficiently large number of iterations—repeatability can be achieved naturally. This phenomenon is also illustrated in Fig.~\ref{fig:np}, where the averages across all trials appear concentrated near what is likely the true mean.

\begin{algorithm}[H]
    \begin{algorithmic}[1]
    \State {\bf Given: $X$, $p$, $q$, $\mathcal{T}$, $f$, $i=1, T=\emptyset$}
    \State {{\bf While} $\mathcal{T}(T)=0$:}
    \State {\ \ \ \ $(\s_0, \bar{\uu})_i = x_i \sim q(X)$}
    \State {\ \ \ \ Collect state trajectory $T_i$ through $f$~\eqref{eq:ctrl-sys}}
    \State {\ \ \ \ $r_i = \Big(\sum_{j=1}^i \psi(x_j) \cdot p(x_j)/q(x_j) \Big)/i$}
    \State {\ \ \ \ $T.\texttt{append}(T_i)$, $i$+=$1$}
    \State {{\bf Output}: $r_i$}
    \end{algorithmic}
    \caption{\footnotesize{$\mathcal{TE}_{IS}(X, p, q, \mathcal{T}, f)$} \label{alg:is_testing}}
\end{algorithm}

However, as noted in~\cite{capito2024repeatable}, ``this only guarantees the probabilistic accuracy of the testing algorithm rather than the repeatability of the testing algorithm." That is, while the estimate may be statistically close to the true value $r^*$, repeated executions of the algorithm do not necessarily produce the same output with high probability. Moreover, from a practical standpoint, achieving repeatability through excessive trials is infeasible, both in terms of cost and time. A key challenge, therefore, is to ensure that repeatability can be achieved efficiently, while balancing accuracy. To formalize this objective, we now introduce precise definitions of testing repeatability and accuracy.



\subsection{The repeatable and accurate testing problem}

The definition of accurate testing is straightforward and is formally stated as follows.
\begin{definition}\label{def:c-acc} [\textbf{$\gamma$-Accuracy}]
    A SQ-type performance estimate testing algorithm $\mathcal{TE}$ is $\gamma$-accurate if for some $c \in (0,1)$, $\gamma \in \R_{\geq0}$, and $r^*$ as defined in~\eqref{eq:rstar},  
    \begin{equation}
        \mathbb{P}(|\mathcal{TE}(\cdot) - r^*| \leq \gamma) \geq 1-c. 
    \end{equation}
\end{definition}

The definition of repeatable testing is given as follows.
\begin{definition}\label{def:beta-r} [\textbf{$\beta$-Repeatability}]
    A SQ-type performance estimate testing algorithm $\mathcal{TE}$ is $\beta$-repeatable if for some $\beta \in (0,1)$ and any two executions of the algorithm on the same testing system of~\eqref{eq:ctrl-sys}, $\mathcal{TE}^i$, $\mathcal{TE}^j$ and $i\neq j$, 
    \begin{equation}
        \mathbb{P}(\mathcal{TE}^i(\cdot) = \mathcal{TE}^j(\cdot)) \geq 1-\beta.
    \end{equation}
\end{definition}

The above definition is largely adapted from~\cite{capito2024repeatable}, following a similar framework to that of~\cite{impagliazzo2022reproducibility}, with one important distinction: the notion of the ``same algorithm" is significantly relaxed. Specifically, we do not require the two algorithms to be identical in every respect. For example, they may differ in implementation details, internal random seeds, sampling distributions, or computational environments, so long as they adhere to the same high-level testing specification. This relaxation is particularly well-suited to the practice of testing, where repeatability primarily concerns the output of the algorithm—as a characterization of the subject robot being tested—rather than the exact internal execution of the algorithm itself.

Under these definitions, it is straightforward to observe that a more ``accurate" algorithm also tends to be more ``repeatable". However, achieving high repeatability under stochastic conditions within limited testing effort is inherently challenging. This motivates the core idea of making a controlled trade-off between accuracy and repeatability. Consider an extreme example: let $g: X \rightarrow \mathcal{M}$ be a measurement function mapping into a compact set $\mathcal{M}$. Suppose we fix a point $m' \in \mathcal{M}$—for instance, the geometric center of $\mathcal{M}$—and define an algorithm that completely ignores the input $x$ and always returns $m'$. By construction, this algorithm is trivially repeatable, as its output is invariant. Yet it is clearly not accurate, since it fails to reflect any variability in $x$.

This example highlights the importance of carefully designing a discretization (or more generally geometric partitions) of the output space that balances accuracy with repeatability. Furthermore, the algorithm must also be efficient—consuming as few samples as possible—so that repeatable and informative measurements can be obtained within a reasonable testing budget. 

\section{Main Method}\label{sec:main}
The presentation of the proposed method begins with a general result showing how any $\gamma$-accurate testing algorithm can be transformed into a $\beta$-repeatable one. We then proceed to establish $\gamma$-accuracy of SQ-type testing algorithms under minimally restrictive assumptions. The section concludes with a variance-sensitive approach that jointly enables efficient, repeatably accurate testing.

\subsection{Testing repeatability}
Consider any testing algorithm $\mathcal{TE}(\cdot)$ defined above with a scalar output in $\mathcal{M} \subset \R$. Assume $\mathcal{TE}$ is $\gamma$-accurate with probability $1-c$ for some $c \in (0,1)$ and $\gamma \geq 0$ by Definition~\ref{def:c-acc}. 

We first define the almost uniform $\alpha$-partition as follows.
\begin{definition}\label{def:alpha-part}
    Given a closed interval $[a,b] \subset \R$, and a positive number $\alpha>0$,an almost-uniform $\alpha$-partition of the interval $[a,b]$ is defined as a finite collection of subintervals: $\{[x_0, x_1),[x_1, x_2),\dots,[x_{n-1},x_n]\}$, satisfying the following conditions:
    \begin{itemize}
        \item coverage: $a = x_0 < x_1 < \ldots < x_n = b$,
        \item uniformity: all subintervals except possibly the first $[x_0, x_1)$ and the last $[x_{n-1}, x_n]$ have length exactly equal to $\alpha$, and
        \item length constraint: each of the exceptional intervals (if they exist) must have length less than or equal to $\alpha$.
    \end{itemize}
\end{definition}
One common approach to constructing an almost uniform $\alpha$-partition, as suggested in~\cite{capito2024repeatable,impagliazzo2022reproducibility}, is to select the first interval uniformly at random from $[0, \alpha]$, followed by intervals of fixed length $\alpha$. This specific construction is also essential to the theoretical results in those works. In contrast, the methods proposed in this paper impose no such restriction on how the $\alpha$-partition is created; partitions may be defined via uniform sampling, manual specification, or any other suitable means.

The key idea for achieving repeatability is then to partition the output space into an almost uniform $\alpha$-partition. Rather than directly returning the original output, the algorithm returns a representative reference value of the interval into which the original output falls. We refer to this modified algorithm as the \emph{$\alpha$-quantized} version of the original algorithm. A formal definition using the midpoint as the reference return is given as follows.
\begin{definition}\label{def:alpha-quant}
Let $\mathcal{TE}(\cdot)$ be any SQ-type testing algorithm with scalar output in a bounded interval $\mathcal{M} = [\underline{m}, \overline{m}] \subset \R$. Let ${[x_0, x_1), [x_1, x_2), \dots, [x_{n-1}, x_n]}$ be an almost uniform $\alpha$-partition of $\mathcal{M}$, as defined in Definition~\ref{def:alpha-part}. The \emph{$\alpha$-quantized testing algorithm}, denoted $\mathcal{TE}_\alpha(\cdot)$, is defined by modifying the output of $\mathcal{TE}$ as follows:
\begin{equation}
    \mathcal{TE}_\alpha(\cdot) = \frac{x_{j-1} + x_j}{2}, \text{ such that } \mathcal{TE}(\cdot) \in [x_{j-1}, x_j).
\end{equation}
\end{definition}
This modification preserves the accuracy property of the original algorithm revealed by the following theorem.
\begin{theorem}\label{thm:quant-acc}
    Given a SQ-type performance estimate testing algorithm $\mathcal{TE}(\cdot)$ with a scalar output in $\mathcal{M} \subset \R$ that is $\gamma$-accurate (with probability $1-c$ for $c \in (0,1)$ and $\gamma \geq 0$). The $\alpha$-quantized version of $\mathcal{TE}(\cdot)$ (for some $\alpha>0$) is $\gamma+\frac{\alpha}{2}$-accurate (with probability $1-c$).
\end{theorem}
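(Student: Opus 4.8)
The plan is to treat the $\alpha$-quantization as a \emph{deterministic} post-processing map applied to the scalar output of $\mathcal{TE}$, and to show that this map perturbs the output by at most $\alpha/2$ pointwise. Since the perturbation is deterministic and uniformly bounded, it cannot degrade accuracy by more than this additive slack, and in particular it leaves the underlying ``good'' event—on which $\mathcal{TE}$ is within $\gamma$ of $r^*$—untouched. The argument therefore splits cleanly into a deterministic geometric estimate followed by a single application of the triangle inequality, with no new probabilistic content required.

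First I would establish the quantization error bound. Fix any realization and suppose $\mathcal{TE}(\cdot) \in [x_{j-1}, x_j)$, so that by Definition~\ref{def:alpha-quant} we have $\mathcal{TE}_\alpha(\cdot) = (x_{j-1}+x_j)/2$, the midpoint of that subinterval. The distance from any point of $[x_{j-1}, x_j)$ to this midpoint is maximized at the endpoints and equals $(x_j - x_{j-1})/2$, hence $|\mathcal{TE}_\alpha(\cdot) - \mathcal{TE}(\cdot)| \leq (x_j - x_{j-1})/2$. By Definition~\ref{def:alpha-part}, every interior subinterval has length exactly $\alpha$ and the two exceptional (first and last) subintervals have length at most $\alpha$, so $x_j - x_{j-1} \leq \alpha$ for every admissible $j$. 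This yields the uniform deterministic bound $|\mathcal{TE}_\alpha(\cdot) - \mathcal{TE}(\cdot)| \leq \alpha/2$, valid on every sample path irrespective of which subinterval the output lands in.

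Next I would combine this with the assumed $\gamma$-accuracy of $\mathcal{TE}$. On the event $E = \{\, |\mathcal{TE}(\cdot) - r^*| \leq \gamma \,\}$, the triangle inequality gives $|\mathcal{TE}_\alpha(\cdot) - r^*| \leq |\mathcal{TE}_\alpha(\cdot) - \mathcal{TE}(\cdot)| + |\mathcal{TE}(\cdot) - r^*| \leq \alpha/2 + \gamma$. Because the quantization map depends only on the value $\mathcal{TE}(\cdot)$, the event $E$ is unchanged by quantization, and Definition~\ref{def:c-acc} guarantees $\mathbb{P}(E) \geq 1-c$. Consequently $\mathbb{P}(|\mathcal{TE}_\alpha(\cdot) - r^*| \leq \gamma + \alpha/2) \geq \mathbb{P}(E) \geq 1-c$, which is precisely the claimed $(\gamma+\tfrac{\alpha}{2})$-accuracy with probability $1-c$.

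I do not anticipate a genuine obstacle here, since the result is in essence a worst-case rounding estimate; the only point requiring care is the uniformity of the $\alpha/2$ bound across the possibly shorter first and last subintervals, which is exactly where the \emph{length constraint} clause of Definition~\ref{def:alpha-part} is invoked. I would also note that the argument never references how the partition was generated, consistent with the paper's observation that the proposed method imposes no restriction on the construction of the $\alpha$-partition.
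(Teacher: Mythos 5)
Your proposal is correct and matches the paper's (unstated, one-sentence) justification: the paper only remarks that the $\gamma+\frac{\alpha}{2}$ bound ``arises from returning the midpoint of the corresponding subinterval,'' which is exactly your deterministic $\alpha/2$ rounding estimate followed by the triangle inequality on the accuracy event. Your writeup simply makes explicit what the paper leaves implicit, including the correct invocation of the length constraint for the exceptional end subintervals.
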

Note that the updated error bound $\gamma + \frac{\alpha}{2}$ arises from returning the midpoint of the corresponding subinterval. Among possible choices for the reference value, the midpoint introduces the smallest additional error due to $\alpha$-quantization, making it the most conservative and optimal in terms of minimizing quantization-induced distortion.

Consider two independent testing algorithm outputs $m_1$ and $m_2$ from a certain \emph{$\alpha$-quantized} execution of the testing algorithm. With a little abuse of notation, let them each independently with
\begin{equation}
    \mathbb{P}(|m_i-r^*|\leq \gamma) \geq 1-c, \forall i \in \{1,2\}.
\end{equation}

To achieve $\beta$-repeatability, let $D=|m_1-m_2|$. When both $m_1$ and $m_2$ lie within distance $\gamma$ from $r^*$, they are confined to the interval $[r^* - \gamma, r^* + \gamma]$ of length $2\gamma$. This geometric constraint implies that for any joint distribution of $(m_1, m_2)$ satisfying our conditions, the cumulative distribution of their distance $D$ must satisfy
\[
\mathbb{P}(D \leq d \mid |m_i - r^*| \leq \gamma) \geq \frac{4\gamma d - d^2}{4\gamma^2} \quad \text{for } d \in [0, 2\gamma].
\]
This bound arises because one cannot arrange points in the constrained region to have fewer close pairs than this formula allows. In $\alpha$-quantized execution with random offset, two points at distance $d$ fall in the same interval with probability
\[
\mathbb{P}_{\text{same}}(d) = \begin{cases}
\frac{\alpha - d}{\alpha} & \text{if } d \leq \alpha \\
0 & \text{if } d > \alpha
\end{cases}.
\]
Therefore, the probability of falling in the same interval is
\[
\mathbb{P}(\text{same interval} \mid |m_i - r^*| \leq \gamma) = \mathbb{E}\left[\frac{(\alpha - D)^+}{\alpha}\right].
\]
The minimum occurs when the distance distribution saturates the geometric bound. Thus
\[
\begin{aligned}
    \mathbb{P}&(\text{same interval} \mid |m_i - r^*| \leq \gamma) \\
    & \geq \int_0^{\alpha} \frac{\alpha - d}{\alpha} \, d\left(\frac{4\gamma d - d^2}{4\gamma^2}\right) \\
    & = \int_0^{\alpha} \frac{\alpha - d}{\alpha} \cdot \frac{4\gamma - 2d}{4\gamma^2} \, dd = \frac{4\gamma\alpha - \alpha^2}{4\gamma^2}
\end{aligned}.
\]
As a result,
\begin{equation}\label{eq:beta-1}
    \mathbb{P}(\text{same interval} \boldsymbol{\mid} |m_i-r^*|\leq \gamma) 
    \geq \frac{4\gamma\alpha-\alpha^2}{4\gamma^2}.
\end{equation}
Including the probability that both points satisfy the accuracy constraint, we thus have
\begin{equation}\label{eq:beta-2}
    (1-c)^2\frac{4\gamma\alpha-\alpha^2}{4\gamma^2} \geq 1-\beta.
\end{equation}
This further leads to the following theorem on repeatable testing.
\begin{theorem}\label{thm:quant-rep}
    Given a SQ-type testing algorithm $\mathcal{TE}(\cdot)$ with a scalar output in $\mathcal{M} \subset \R$ that is $\gamma$-accurate (with $\gamma > 0$ and probability $1-c$ for $c \in (0,1)$). The $\alpha$-quantized version of $\mathcal{TE}(\cdot)$ is $\beta$-repeatable ($\beta \in (0,1)$ and $(1-c)^2 \geq (1-\beta)$) for 
    \begin{equation}\label{eq:alpha}
       \alpha = 2\gamma\frac{(1-c)-\sqrt{(1-c)^2 - (1-\beta)} }{(1-c)}
    \end{equation}
\end{theorem}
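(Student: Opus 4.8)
The plan is to reduce $\beta$-repeatability to a statement about the probability that two independent $\alpha$-quantized outputs land in the same subinterval, and then to invert the coincidence-probability bound already assembled in~\eqref{eq:beta-2} to solve for $\alpha$. First I would observe that, by Definition~\ref{def:alpha-quant}, two executions $\mathcal{TE}^i_\alpha$ and $\mathcal{TE}^j_\alpha$ return the identical midpoint value precisely when their underlying pre-quantization outputs fall in the same cell of the $\alpha$-partition. Hence the coincidence event in Definition~\ref{def:beta-r}, $\{\mathcal{TE}^i_\alpha(\cdot) = \mathcal{TE}^j_\alpha(\cdot)\}$, coincides with the ``same interval'' event, so $\beta$-repeatability is exactly the requirement that the coincidence probability be at least $1-\beta$.

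Next I would take the worst-case lower bound on that coincidence probability, namely $(1-c)^2\frac{4\gamma\alpha-\alpha^2}{4\gamma^2}$, and impose $(1-c)^2\frac{4\gamma\alpha-\alpha^2}{4\gamma^2}\geq 1-\beta$ as a sufficient condition. Writing $k=(1-c)^2$, this rearranges into the quadratic inequality $k\alpha^2 - 4k\gamma\alpha + 4\gamma^2(1-\beta)\leq 0$ in the variable $\alpha$. Solving the associated equality by the quadratic formula, the discriminant factors as $16k\gamma^2\bigl(k-(1-\beta)\bigr)$ and a common factor of $(1-c)$ cancels cleanly, yielding the two roots $\alpha_\pm = 2\gamma\,\bigl[(1-c)\pm\sqrt{(1-c)^2-(1-\beta)}\bigr]/(1-c)$; the inequality then holds for every $\alpha\in[\alpha_-,\alpha_+]$.

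I would then justify selecting the smaller root $\alpha_-$, which is the value asserted in~\eqref{eq:alpha}. Since Theorem~\ref{thm:quant-acc} shows the accuracy penalty of $\alpha$-quantization grows as $\alpha/2$, the smallest admissible $\alpha$ delivers the required repeatability while sacrificing the least accuracy, so the minus branch is optimal. A short feasibility check finishes this part: the discriminant $(1-c)^2-(1-\beta)$ is nonnegative exactly under the stated hypothesis $(1-c)^2\geq 1-\beta$, guaranteeing a real root; and since $g(\alpha)=\frac{4\gamma\alpha-\alpha^2}{4\gamma^2}$ is increasing on $[0,2\gamma]$ with maximum value $1$ at $\alpha=2\gamma$, the constraint $k\,g(\alpha)\geq 1-\beta$ is satisfiable iff $k\geq 1-\beta$, and one checks $\alpha_-\in(0,2\gamma]$.

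The main obstacle I anticipate is not the algebra but the justification of the extremal geometric bound $\mathbb{P}(D\le d\mid |m_i-r^*|\le\gamma)\geq (4\gamma d - d^2)/(4\gamma^2)$ that underlies~\eqref{eq:beta-2}: it requires arguing that no joint law of $(m_1,m_2)$ confined to $[r^*-\gamma,r^*+\gamma]$ can produce fewer close pairs than the stated cumulative distribution. Because that worst-case argument is supplied in the preamble, the remaining work for the theorem itself is the routine quadratic inversion together with the branch-selection and feasibility bookkeeping described above.
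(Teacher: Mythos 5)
Your proposal is correct and follows essentially the same route as the paper's proof in Appendix~\ref{apx:thm2}: starting from the coincidence-probability bound~\eqref{eq:beta-2}, rearranging into a quadratic inequality in $\alpha$, checking that the discriminant condition is exactly $(1-c)^2 \geq (1-\beta)$, and selecting the smaller root to minimize the accuracy penalty. The only difference is presentational — you correctly note that the geometric worst-case bound is established in the preamble rather than reproved inside the theorem's proof, which matches the paper's structure.
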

The proof follows directly from~\eqref{eq:beta-2} and is elaborated in detail in Appendix~\ref{apx:thm2}. Furthermore, if one wishes to extend the notion of pairwise repeatability, as defined in Definition~\ref{def:beta-r}, to encompass $k$ independent trials, it suffices to adjust the power coefficient in~\eqref{eq:beta-2} (e.g., replacing $(1 - c)^2$ with $(1 - c)^k$ to reflect $k$-way repeatability across independent executions). However, this generalization is often unnecessary in the standardized testing context or applications of interest by this paper. In most cases, repeatability is assessed by asking whether a single independent stakeholder (i.e., a \emph{replicator}) can replicate the outcome produced by another (i.e., an \emph{initiator})—i.e., a pairwise setting. Even when multiple stakeholders attempt to repeat a result, the comparison typically remains pairwise between the original test and each independent repetition.

Moreover, the properties established in Theorems~\ref{thm:quant-acc} and~\ref{thm:quant-rep} hold for any almost uniform $\alpha$-partition of the output space of the testing algorithm, provided that the partition remains fixed across repeated trials. This fixed partition, along with other hyperparameters such as $c$, $\beta$, and $\gamma$, collectively serves as a ``random-seed"-like configuration that enables repeatable testing in robotics settings involving uncertainties with software, hardware, environment, and combinations of the factors.

By combining Theorems~\ref{thm:quant-acc} and~\ref{thm:quant-rep}, we obtain the following guarantee: with probability at least $1 - c$ (for $c \in (0,1)$), the output of the testing algorithm deviates from the ground-truth estimate by no more than $\gamma + \frac{\alpha}{2}$ ($\gamma \geq 0$ is given and $\alpha$ determined by \eqref{eq:alpha}), and with probability at least $1 - \beta$ (for $\beta \in (0,1)$), repeated executions of the algorithm return exactly the same quantized estimate. 
\begin{remark}\label{rmk:acc-rep}
    Note that the condition $(1 - c)^2 \geq (1 - \beta)$ is crucial for the validity of~\eqref{eq:alpha}. More importantly, it underscores a fundamental intrinsic relationship between accuracy and repeatability: one cannot achieve higher confidence in pairwise repeatability than in pairwise accuracy. In other words, the confidence level for repeatability is inherently bounded by that of the underlying accuracy guarantee. This can also be interpreted from a stability perspective, as we shall see in the following section: less stable algorithms are inherently less accurate and, consequently, less repeatable.
\end{remark}

We now return to justify the earlier assumption of $\gamma$-accuracy.

\subsection{Testing accuracy}
For start, the update~\eqref{eq:is} can be re-written as
\begin{equation}\label{eq:is-dtsd}
    r_{n} = r_{n-1} + \frac{1}{n} \left(\psi(x_n)\frac{p(x_n)}{q(x_n)} - r_{n-1} \right)
\end{equation}
The $\gamma$-accuracy of the testing algorithm is established by demonstrating that the above estimator converges to $r^*$ by \eqref{eq:rstar} almost surely.

While there are many ways to showcase the accuracy property, including but not limited to techniques involving Martingale theory~\cite{tokdar2010importance}, Robbins–Monro stochastic approximation theory, central limit theorems~\cite{feng2020testing}, our analysis for this paper emphasizes an alternative route. Specifically, we show that the desired result can be achieved through a direct application of Lyapunov's theorem on the stability of discrete-time stochastic dynamical systems (see Theorem~\ref{thm:lyapunov} in Appendix~\ref{apx:lya}). A straightforward extension of the proof in Appendix~\ref{apx:lya} establishes that the estimator $r_n$ converges almost surely to the true value $r^*$ with the Lyapunov function $V(r_n)= (r_n-r^*)^2$. From this convergence result, a probabilistic bound (through Markov's inequality) can then be derived for any specified tolerance level $\gamma$, as
\begin{equation}\label{eq:lya-conv}
    \mathbb{P}(|r_n-r^*| \geq \gamma) \leq \frac{\mathbb{E}[V(r_n)]}{\gamma^2}.
\end{equation}
Notably, this stability-based analysis imposes minimal—if any—distributional assumptions. It does not require the samples to be identically distributed, nor does it rely on Gaussianity, Markovian structure, stationarity, or other common statistical conditions. This generality underscores the flexibility of the proposed algorithm and its suitability for real-world standard testing practices, where such idealized assumptions often do not hold.

Specifically, by removing the requirement of identically distributed samples, the importance distribution $q$—or more generally, the specific samples or test cases—can be defined adaptively in a variety of ways. This flexibility allows for the use of adaptive sampling techniques and supports a broad range of feedback testing policies, including those parameterized by neural networks. All such approaches remain fully compatible with the proposed framework.

The inequality in~\eqref{eq:lya-conv} also reveals a variance-aware bound, which naturally emerges when one explicitly accounts for incremental and conditionally stochastic behavior. However, in most practical applications—including those using non-Lyapunov-based approaches—variance-based bounds are often overly conservative or loosely approximated. A prime example is Hoeffding's inequality~\cite{hoeffding1994probability,mcdiarmid1998concentration}, which assumes uniform bounds on the random variable and does not adapt to observed sample variance (as illustrated later in Eq.~\eqref{eq:hoeffding}). Such bounds fail to capture the finer structure of sample-wise variability and may significantly overestimate the required testing effort.

While the Lyapunov-based approach offers a more flexible theoretical framework and can, in principle, incorporate recursive updates and second-moment information through drift conditions, it is not optimized for producing tight, sample-efficient probabilistic bounds in practice. This limitation motivates the final component of our analysis: an empirical Bernstein inequality–based method~\cite{maurer2009empirical}, which provides sharper, variance-sensitive bounds that adapt to observed variability, resulting in greater testing efficiency while preserving accuracy and repeatability guarantees.

\subsection{Testing efficiency}
Efficient testing relies on triggering termination as early as possible (see Line 2 of Algorithm~\ref{alg:is_testing}). The proposed termination condition is formally characterized by the following theorem.
\begin{theorem}\label{thm:var_rp}
    The $\alpha$-quantized (with $\alpha$ determined by~\eqref{eq:alpha} given $c,\beta\in(0,1)$ and $\gamma$) Algorithm~\ref{alg:is_testing} with termination condition
    \begin{equation}
        \mathcal{T}(T) = \begin{cases}
            1  & \text{ if $ n = |T|$ satisfies \eqref{eq:n_bar}} \\
            0 & \text{ otherwise}
        \end{cases}
    \end{equation}
    \begin{equation}\label{eq:n_bar}
        \gamma \geq \sqrt{\frac{2\hat{\sigma}_n \ln(2/c)}{n}} + \frac{7(m\overline{w})^2\ln(2/c)}{3(n-1)}.
    \end{equation}
    \begin{equation}
        \hat{\sigma}_n = \frac{1}{|T|}\sum_{i=1}^{|T|} (\psi(x_i)w(x_i) - r_n)^2
    \end{equation}
    is $\gamma+\frac{\alpha}{2}$-accurate (with probability $1-c$ for $c\in(0,1)$) and $\beta$-repeatable (with $\alpha$ determined by~\eqref{eq:alpha} supplied with $\gamma, c$, and $\beta\in(0,1)$).
\end{theorem}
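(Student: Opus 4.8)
The plan is to recognize that Theorem~\ref{thm:var_rp} is essentially a synthesis of the empirical Bernstein inequality~\cite{maurer2009empirical} with the already-established Theorems~\ref{thm:quant-acc} and~\ref{thm:quant-rep}. The only genuinely new ingredient is showing that the termination condition~\eqref{eq:n_bar} certifies $\gamma$-accuracy of the \emph{un-quantized} estimator $r_n$ in the sense of Definition~\ref{def:c-acc}; once that is in hand, the accuracy and repeatability guarantees of the $\alpha$-quantized algorithm follow immediately by invoking the two earlier theorems.

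First I would identify the per-sample random variable $Y_i = \psi(x_i) w(x_i)$ with $w(x_i) = p(x_i)/q(x_i)$. Under the fixed-$q$ setting of Algorithm~\ref{alg:is_testing}, the $Y_i$ are i.i.d.; they are bounded because $\psi$ takes values in $[\underline{m}, \overline{m}]$ (so its span is $m$) and $w \leq \overline{w}$; and they are unbiased for the ground truth, since $\mathbb{E}_{x\sim q}[\psi(x)w(x)] = \mathbb{E}_{x\sim q}[\psi(x)p(x)/q(x)] = \mathbb{E}_{x\sim p}[\psi(x)] = r^*$. Their empirical mean is exactly $r_n$ (Line 5 of Algorithm~\ref{alg:is_testing}) and their empirical variance is exactly $\hat{\sigma}_n$. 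Applying the empirical Bernstein inequality to $\{Y_i\}$ yields, with probability at least $1-c$,
\[
|r_n - r^*| \leq \sqrt{\frac{2\hat{\sigma}_n \ln(2/c)}{n}} + \frac{7 R\,\ln(2/c)}{3(n-1)},
\]
where $R$ is a bound on the range of $\psi w$ expressed through $m$ and $\overline{w}$, matching the second term of~\eqref{eq:n_bar}. Because the termination rule fires precisely when this right-hand side drops to or below $\gamma$, at the stopping sample size $n$ we obtain $|r_n - r^*| \leq \gamma$ with probability at least $1-c$; that is, the base estimator satisfies Definition~\ref{def:c-acc}. With $\gamma$-accuracy of $r_n$ secured, I would then apply Theorem~\ref{thm:quant-acc} to conclude that the $\alpha$-quantized output is $(\gamma + \frac{\alpha}{2})$-accurate with probability $1-c$, and Theorem~\ref{thm:quant-rep} — with $\alpha$ chosen by~\eqref{eq:alpha} and the admissibility condition $(1-c)^2 \geq 1-\beta$ — to conclude $\beta$-repeatability. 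These two invocations are purely mechanical once the premise is in place.

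The hard part will be the optional-stopping subtlety: the empirical Bernstein inequality is a \emph{fixed-$n$} statement, whereas in Algorithm~\ref{alg:is_testing} the sample size $n$ at which~\eqref{eq:n_bar} first holds is a data-dependent stopping time. Invoking a fixed-sample concentration bound at a random $n$ is not automatically valid, so I would close this gap either by a peeling/union-bound argument over the possible values of $n$ (replacing $c$ by a summable sequence $c_n$ with $\sum_n c_n \le c$, at the cost of a logarithmic inflation of the constants), or by appealing to an anytime-valid, martingale form of the empirical Bernstein bound that holds uniformly in $n$. A secondary check is that the one-sided Maurer--Pontil bound is correctly promoted to the two-sided deviation $|r_n - r^*|$, and that the range constant together with the $1/n$ versus $1/(n-1)$ normalization of $\hat{\sigma}_n$ are reconciled with the stated form of~\eqref{eq:n_bar}; these are routine but must be handled to keep the confidence bookkeeping exact.
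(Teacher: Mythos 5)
Your proposal follows essentially the same route as the paper's proof: apply the (rescaled, two-sided) empirical Bernstein inequality of Maurer--Pontil to the bounded, unbiased variables $\psi(x_i)w(x_i)$ to certify $\gamma$-accuracy of $r_n$ once~\eqref{eq:n_bar} holds, then invoke Theorem~\ref{thm:quant-acc} and Theorem~\ref{thm:quant-rep} to transfer this to $(\gamma+\frac{\alpha}{2})$-accuracy and $\beta$-repeatability of the quantized output. The one point where you go beyond the paper is the optional-stopping caveat: the paper applies the fixed-$n$ concentration bound directly at the data-dependent termination time without any peeling or anytime-valid correction, so your flagged gap is real and your proposed union-bound or martingale fix is a legitimate strengthening rather than a departure from the paper's argument.
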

\begin{proof}
    To prove the given bound~\eqref{eq:n_bar}, A two-sided tail bound is first derived (with some minor modifications scaling the original bound $[0,1]$ in~\cite{maurer2009empirical} to $[0,m\overline{w}]$ to accommodate the variance) from the empirical Bernstein inequality (Theorem 4~\cite{maurer2009empirical}, and Theorem 11~\cite{maurer2009empirical} for independent (but not identically distributed) samples), which leads to
    \begin{equation}\label{eq:direct_ebe}
        |r_n-r^*| \leq \sqrt{\frac{2\hat{\sigma}_n \ln(2/c)}{n}} + \frac{7(m\overline{w})^2\ln(2/c)}{3(n-1)}.
    \end{equation}
    Having $\gamma$ $\geq$ R.H.S. of~\eqref{eq:direct_ebe} leads to the suggested condition \eqref{eq:n_bar}. That is, with $n$ sampled tests satisfying the inequality of~\eqref{eq:n_bar}, Algorithm~\ref{alg:is_testing} is $\gamma$-accurate (with probability $1-c$). By Theorem~\ref{thm:quant-acc} and Theorem~\ref{thm:quant-rep}, the $\alpha$-quantized Algorithm~\ref{alg:is_testing} is thus $\gamma+\frac{\alpha}{2}$-accurate and $\beta$-repeatable. 
\end{proof}
Given the above, $m$ and $\overline{w}$ are separately considered to ensure that the worst-case scenario corresponds to the case where the sample with the largest measure also receives the largest weight. In practical testing scenarios with structural insights or empirical evidence, one can adopt a smaller empirical or conditional bound on $\psi(\cdot)\omega(\cdot)$ rather than separately bounding the two terms.



Note the empirical Bernstein inequality implied bound in~\eqref{eq:direct_ebe} explicitly utilize the online estimate of variance. When the empirical variance $\hat{\sigma}_n \ll (m\overline{w})^2$, this bound decays faster than Hoeffding's, leading to a tighter confidence with fewer samples needed. In the worst case (variance near max), it’s within a constant factor of Hoeffding’s as
\begin{equation}\label{eq:hoeffding}
    \mathbb{P}(|r_n-r^*| \geq \gamma) \leq 2\exp{\Bigg( -\frac{2n\gamma^2}{(m\overline{w})^2} \Bigg)}
\end{equation}
In practice, the termination condition can incorporate both bounds, i.e., choosing whichever yields the tighter criterion. Consequently, the practical efficiency of the testing algorithm hinges on the empirical variance $\hat{\sigma}_n$, which is itself influenced by the choice of the importance distribution $q$. This connects to a broad class of strategies in accelerated, adaptive, and even adversarial testing~\cite{feng2020testing,feng2021intelligent,zhao2016accelerated,li2020av,capito2020modeled,lee2019adastress,koren2018adaptive}. Regardless of the specific approach used to define, affect, or implicitly impact $q$, the proposed method enables testing that is accurate, repeatable, and sample-efficient.

As previously discussed, the variance-dependent bound presented in this paper is not the only option available, nor is it unique in accommodating non-identically distributed samples. The Lyapunov-based approach introduced earlier is also well-suited for this purpose and offers a flexible theoretical framework. However, to the best of our knowledge, the bound given in Eq.~\eqref{eq:direct_ebe}, derived from the empirical Bernstein inequality, consistently yields the tightest results in practice. Among the methods we have explored, it enables the most sample-efficient execution of repeatable SQ-type testing algorithms discovered to date. It also notably surpasses existing theoretical results, including the KL-divergence-parameterized bound by Capito et al.\cite{capito2024repeatable}, as well as the bound for Monte Carlo sampling-based SQ by Impagliazzo et al.\cite{impagliazzo2022reproducibility} (primarily based on the Hoeffding's~\cite{hoeffding1994probability} as shown by \eqref{eq:hoeffding}).

In summary, the practical execution of the proposed method, between a \emph{testing initiator} and any \emph{replicator}, is outlined below, with guarantees of repeatability, accuracy, and efficiency. 
\begin{algorithm}
    \begin{algorithmic}[1]
    \State {\bf 1. Testing Initiator}: 
    \State {\ \ \bf Given $X$, $p$, $q$, $f$, $c$, $\gamma$, $\beta$, $\mathcal{M}$, $m, \bar{\omega}$}
    \State {\ \ Decide $\alpha$ by~\eqref{eq:alpha} and $\alpha$-partition of $\mathcal{M}$ by Definition~\ref{def:alpha-part}}
    \State {\ \ {\bf Output:} $\alpha$-quantized Algorithm~\ref{alg:is_testing} by Theorem~\ref{thm:var_rp}}
    \State {\bf 2. Testing Replicator}:
    \State {\ \ {\bf Given} $X$, $p$, $q$, $f$, $c$, $\gamma$, $\beta$, $\mathcal{M}$, $m, \bar{\omega}$, $\alpha$-partition of $\mathcal{M}$}
    \State {\ \ {\bf Output:} $\alpha$-quantized Algorithm~\ref{alg:is_testing} by Theorem~\ref{thm:var_rp}}
    \end{algorithmic}
    \caption{Summary of the proposed algorithm} \label{alg:ris_testing}
\end{algorithm}

We conclude this section with some comments on hyperparameter specifications. For parameters like $c$, $\beta$, and $\gamma$, their interpretations are standard and intuitive. We additionally note that unlike the confidence level coefficients, $\gamma$ carries physical meaning and may have different units depending on the application context, but this does not affect the validity of our theoretical bounds. Furthermore, as established in Theorem~\ref{thm:quant-rep} and discussed in Remark~\ref{rmk:acc-rep}, there exists a fundamental trade-off between accuracy and repeatability: the parameters $c$ and $\beta$ must satisfy certain relative conditions for the framework to be well-defined, reflecting the inherent tension between achieving high confidence in accuracy (small $c$) and high repeatability (small $\beta$). The bounds $m$ and $\overline{\omega}$ also play important roles in our framework through \eqref{eq:direct_ebe} and other related bounds, and should be determined adaptively based on the specific problem characteristics (see Section~\ref{sec:exp} for practical examples). Finally, $\alpha$ is not a directly specifiable parameter and must be determined indirectly through $c$, $\beta$, and $\gamma$.

\section{Experiments \& Applications}\label{sec:exp}
The results presented in this section extend beyond a mere empirical validation of the theoretical findings proven in Section~\ref{sec:main}. They selectively highlight practical applications by: (i) demonstrating how the proposed method can integrate with existing standards to enhance the interpretability of outcomes and generalize to broader classes of robotic systems; (ii) making ongoing standardization efforts practically feasible through reduced testing effort and improved repeatability and accuracy; and (iii) laying a practical foundation for robot evaluation in domains lacking established formal testing standards.

The applications presented span diverse robotic domains, including manipulators, automated vehicles, and humanoid robots. They address a wide variety of performance measures, such as positioning repeatability, collision-avoidance risk assessment, and trajectory-tracking accuracy. Moreover, the testing scenarios encompass simulation-based evaluations, real-world experiments, and hybrid settings combining both approaches.

\begin{remark}
    The primary focus of this study lies in the testing perspective and its associated properties. While details about the subject robots and their control algorithms are necessary for experimental completeness, they are not central to the testing methodology and can, in principle, be replaced with alternative systems. Consequently, certain implementation details—such as the impedance-based standing controller used for the humanoid robot—may be omitted. Additional configuration information, where relevant, can be found in corresponding open-source resources~\cite{nadegithub,unitreegithub}, if available.
\end{remark}

\begin{remark}
    In alignment with Remark~\ref{rmk:rp}, the experimental results presented in this section may incidentally highlight several issues and challenges associated with existing robotic systems. For instance, Section~\ref{sec:exp-manip}, to the best of our knowledge, offers the first empirical study on manipulation positioning repeatability in humanoid robots operating under whole-body manipulation. The findings may indicate significant limitations that warrant further attention from the robotics community. However, these insights are not the primary focus (nor the intended objective) of this paper. Likewise, Section~\ref{sec:exp-loco} may be interpreted as a critique of the locomotive accuracy of certain practical implementations. While these observations are outside the formal scope of this work, they nevertheless represent promising directions for future investigation.
\end{remark}

The experiments and applications presented include some comparisons with algorithms based on statistical significance criteria; however, comparisons with alternative methods that offer formal repeatability guarantees are limited, as such studies remain extremely scarce. Nevertheless, the advantages of the proposed algorithm—made explicit through Theorem~\ref{thm:var_rp}—are readily apparent when compared to prior approaches, such as those by Capito et al.\cite{capito2024repeatable} and Impagliazzo et al.\cite{impagliazzo2022reproducibility}, particularly in the context of importance sampling and Monte Carlo sampling routines.

The following observations are highlighted before proceeding to individual applications:
\begin{itemize}
    \item \textbf{Repeatability} is consistently demonstrated across all empirical examples, aligning with the theoretical guarantees established in Theorem~\ref{thm:quant-rep} and related results.
    \item \textbf{Accuracy} is generally more challenging to evaluate empirically, as the testing scenarios explored in this section involve complex and partially unobservable systems. However, in special cases—such as the NADE study in Section~\ref{sec:exp-nade}, approximations of the ground-truth value $r^*$ with sufficiently high confidence are available. In those cases, the output of our proposed algorithm remains bounded and consistent with the accuracy guarantees provided by Theorem~\ref{thm:quant-acc}.
    \item \textbf{Efficiency} is particularly evident in empirical results where variance reduction is applicable. Depending on the utility of the specific adaptive testing algorithm and the nature of the underlying problem (i.e., whether the testing process can be fundamentally accelerated), the absolute gains may vary. Nonetheless, relative improvements over traditional baselines remain consistently observable.
\end{itemize}

\subsection{Manipulation positioning tests}\label{sec:exp-manip}
\noindent\textbf{Testing configurations \& execution}: Recall the results shown in Fig.~\ref{fig:iso9283_np}(b), a Unitree G1 humanoid robot is equipped with a whole-body impedance controller to maintain stable and robust lower-body behavior (to a certain extent), along with a standard inverse kinematics solution for upper-body manipulation, including yaw control at the waist. In accordance with the procedures recommended by ISO 9283~\cite{iso9283}, an initial positioning point near the robot’s torso is established and designated as the reference position for assessing positioning repeatability. The robot repeatedly performs a standardized two-step cycle until the required sample size for statistical evaluation is achieved. The steps include: (i) moving the end-effector to a specified target position, with target locations uniformly distributed within a predefined test workspace situated to the right of the reference position. Each motion segment consistently approaches the measurement location from the same direction (right to left), and the robot maintains a stationary state at this location for a minimum dwell time of three seconds; (ii) returning precisely to the reference position and again maintaining a stationary state for a minimum dwell period of three seconds before proceeding with the next cycle.

ISO 9283, primarily developed for evaluating industrial manipulators, recommends collecting between 20 and 50 samples for each reference point. Due to the significant variability observed in our experimental results, we have substantially increased the number of samples: to 250 samples in Fig.\ref{fig:iso9283_np}, and further up to 700 samples as shown in Fig.\ref{fig:g1_manip_compare}. Additionally, while the complete ISO 9283 standard typically involves multiple (usually five) distinct reference positions for comprehensive evaluation, the testing procedure employed in this study has simplified the protocol by focusing solely on a single reference position.

In Fig.~\ref{fig:iso9283_np}, we present three independent executions of the testing procedure for evaluating the manipulation repeatability of the G1 humanoid robot, showcasing the full distribution of end-effector displacement for each trial. Fig.~\ref{fig:g1_manip_compare} extends this evaluation by executing the same procedure over 200 independent trials, with each trial consisting of 700 samples as previously described. For each trial, we record the average end-effector displacement (in millimeters) and the corresponding standard deviation. The distribution of the 200 average displacement values is shown in green in Fig.~\ref{fig:g1_manip_compare}(a), while the distribution of their associated deviations is shown in green in Fig.~\ref{fig:g1_manip_compare}(b).

Overall, Fig.~\ref{fig:g1_manip_compare} reinforces the issue highlighted in Fig.~\ref{fig:iso9283_np}: the standard ISO-9283 procedure, as applied to this setting, fails to yield consistent and repeatable outcomes across independent executions. 

To further demonstrate the effectiveness of our proposed method, we set the parameters as follows: $\gamma = 0.1$, $c = 0.05$, $\beta = 0.1$, $m = 6$, and $\bar{\omega} = 1$. Although the displacement in this task is theoretically unbounded, it remains tightly constrained in practice. The choice of $m = 6$ mm serves as a conservative upper bound, substantially exceeding the maximum displacement observed empirically, and it successfully upper-bounds all displacement values recorded across trials. The parameter $\bar{\omega} = 1$ represents the theoretical worst-case under uniform sampling assumptions. According to Theorem~\ref{thm:var_rp}, this setup ensures that the expected displacement lies within 0.195 mm of the (primarily unknown) ground truth displacement value with at least 95\% probability, and that any pair of repeated trials yields the same outcome with at least 90\% probability.

\noindent\textbf{Results \& discussions}: Empirically, in real-world testing, we first executed the testing initiator's procedure from Algorithm~\ref{alg:ris_testing} to establish a baseline reference, followed by 200 independent executions of the testing replicator's procedure. The distribution of expected displacement values across these 200 replicated trials is shown in red in Fig.~\ref{fig:g1_manip_compare}(a). The algorithm achieved a 98\% repeatability rate, with only 4 exceptions out of 200 trials. The corresponding fixed error bound, $\gamma + \frac{\alpha}{2} = 0.195$, holds uniformly across trials and is depicted in Fig.~\ref{fig:g1_manip_compare}(b) in red. 

Compared to the standard procedure adapted from ISO~9283, the proposed algorithm demonstrates significantly higher repeatability while providing more direct and informative insights into performance. Due to small random variations, the number of samples collected across the 200 trials was not fixed, but consistently fell within the limited range of 598 to 601—nearly 100 fewer samples per trial than adopted by the ISO 9283 adaptation. It is important to note, however, that the proposed method does not inherently make the testing process more \emph{efficient} in terms of sample reduction. A closer analysis reveals that the displacement error is nearly uniformly distributed with respect to the initial position across the configuration space described in step (i) of the testing procedure. In other words, no particular choice of $q$ can significantly accelerate convergence or improve sample efficiency under this setting.

Nevertheless, when faced with testing outcomes that are noisy and potentially difficult to interpret, the proposed method is able to \emph{interpret} and structure these results in a way that enables statistically meaningful, repeatable, and reliable evaluation, something the standard ISO procedure struggles to ensure under similar noise conditions.

\begin{figure}
    \vspace{1mm}
    \centering
    \includegraphics[trim={3cm 0cm 2.5cm 1cm},clip,width=0.99\linewidth]{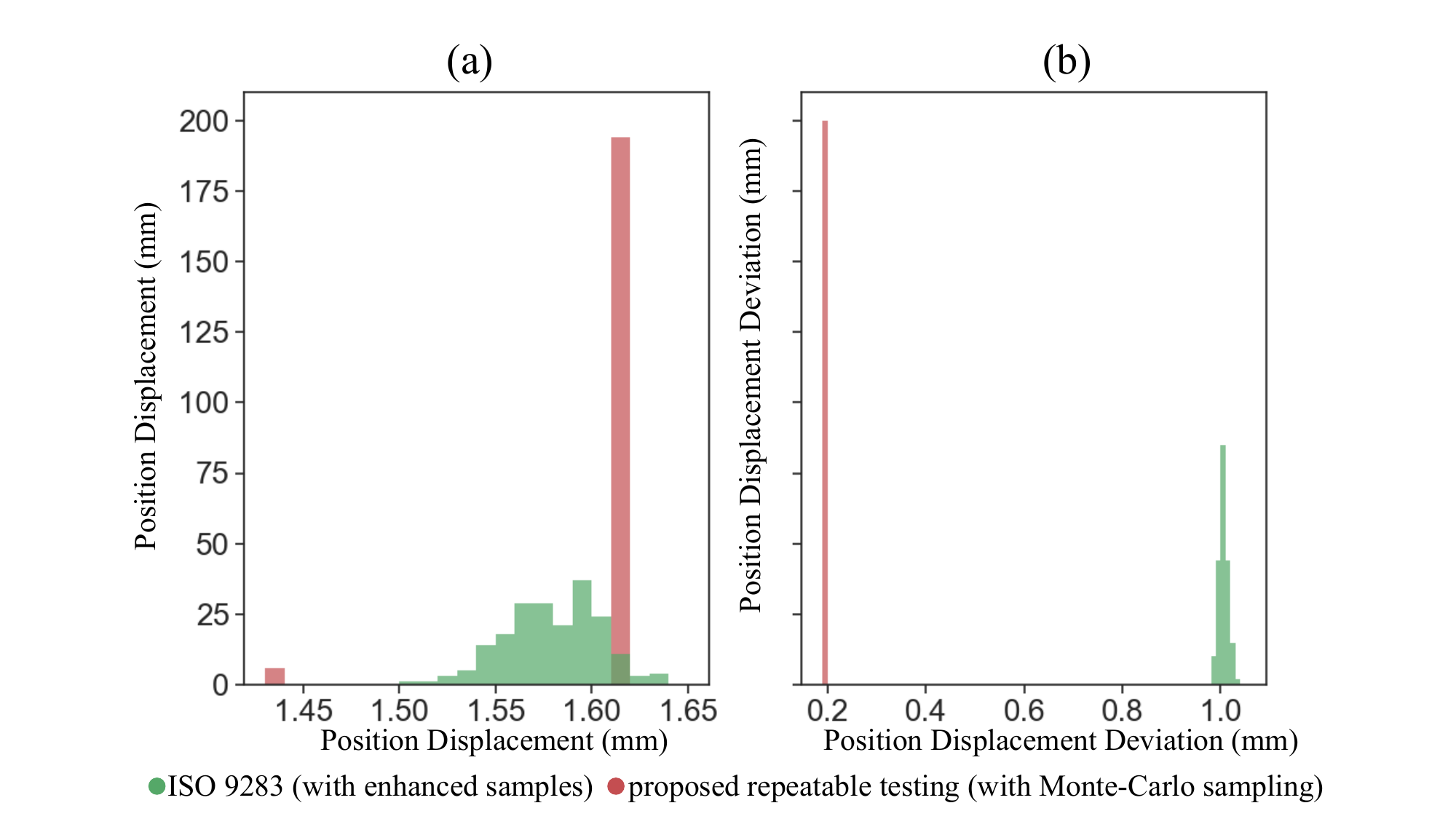}
    \caption{A comparative study is conducted to evaluate the repeatability of two testing methods through manipulation repeatability assessment of the Unitree G1 humanoid robot. Both methods are based on Monte Carlo sampling but differ in design. The first follows a procedure adapted from ISO 9283 with an extended number of samples (700), while the second employs the proposed Algorithm~\ref{alg:ris_testing}, which incorporates a structured quantization mechanism to enhance repeatability. Subfigures (a) and (b) show the histogram distributions of position displacement and deviation, respectively.}
    \label{fig:g1_manip_compare}
    \vspace{-3 mm}
\end{figure}

\subsection{Intelligent driving collision avoidance tests}\label{sec:exp-nade}

\begin{figure*}
    \vspace{1mm}
    \centering
    \includegraphics[trim={0cm 2cm 0cm .5cm},clip,width=0.95\linewidth]{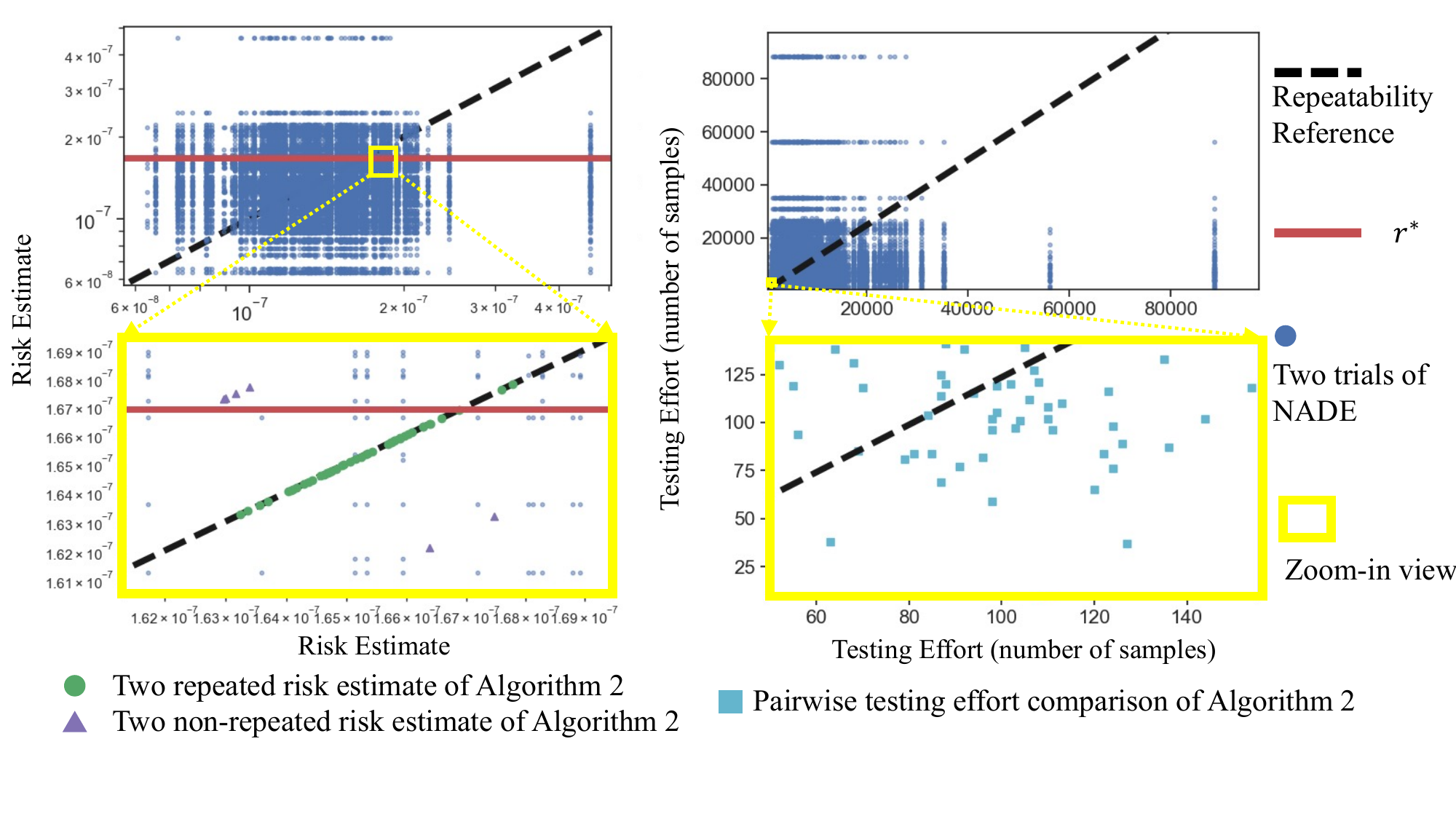}
    \caption{Integrating the proposed method with NADE previously shown in Fig.~\ref{fig:nade_np} with the non-repeatability issue.}
    \label{fig:nade}
    \vspace{-4mm}
\end{figure*}
This case study integrates the proposed algorithm for repeatability and accuracy evaluation into an already accelerated algorithm, NADE~\cite{feng2021intelligent}, within the context of collision risk estimate of automated vehicles. A similar issue was identified by Capito et al.~\cite{capito2024repeatable}, but their approach did not demonstrate a resolution within the NADE framework, largely due to the lack of publicly available information regarding the KL-divergence between distributions $p$ and $q$ in the original NADE open-source code~\cite{nadegithub}. Even if such information were available, their KL divergence–based methodology lacks the efficiency and effectiveness of the proposed variance-dependent bound established in Theorem~\ref{thm:var_rp}.

\noindent\textbf{Testing configurations \& execution}: In this study, we implemented an $\alpha$-quantized version of NADE. Our method naturally ``wraps" around the original NADE algorithm and augments it with structured guarantees of repeatability and accuracy—on top of the sampling efficiency and variance reduction already achieved through NADE’s importance sampling design.

Recall the NADE test results presented in Fig.~\ref{fig:nade_np}. The same results are reorganized as the first-row plots in Fig.~\ref{fig:nade} for consistency with subsequent comparisons. Let the parameters be set as follows: $\gamma = 3 \times 10^{-9}$, $c = 0.01$, $\beta = 0.1$, and $m\bar{\omega} = 0.0001$. Notably, the empirical upper bound of the term $m\bar{\omega}$ observed across all experiments was approximately $3.4 \times 10^{-7}$, indicating that the chosen bound is significantly more conservative and thus more than sufficient for the intended guarantees. According to Theorems~\ref{thm:quant-acc} and~\ref{thm:quant-rep}, the resulting tolerance for the accuracy of the $\alpha$-quantized NADE is approximately $5.14 \times 10^{-9}$.

In this study, we repeatedly executed Algorithm~\ref{alg:ris_testing}, in this case, the $\alpha$-quantized NADE—with one initiator and one replicator over 100 independent pairwise trials. Each pair produced two risk estimates along with their corresponding testing efforts (i.e., the number of tests performed). The results are presented in Fig.~\ref{fig:nade}. Due to the extremely concentrated outcomes produced by the proposed method, the data points are not clearly visible in the first-row plots of Fig.~\ref{fig:nade} (which are identical to Fig.~\ref{fig:nade_np} without the proposed $\alpha$-quantization); therefore, zoomed-in views are provided in the corresponding subfigures in the second row. For the risk estimates, we further distinguish between repeatable and non-repeatable pairwise trials—depicted as green dots and purple triangles, respectively. For testing effort comparisons, this distinction is not applied. The ground truth estimate of collision risk, originally reported by Feng et al.~\cite{feng2021intelligent} and its accompanying codebase~\cite{nadegithub}, based on extensive testing in the Naturalistic Driving Environment (NDE), is also highlighted as a red segment.

\begin{figure}
    \centering
    \includegraphics[trim={2.5cm 1cm 5cm 4cm},clip,width=0.99\linewidth]{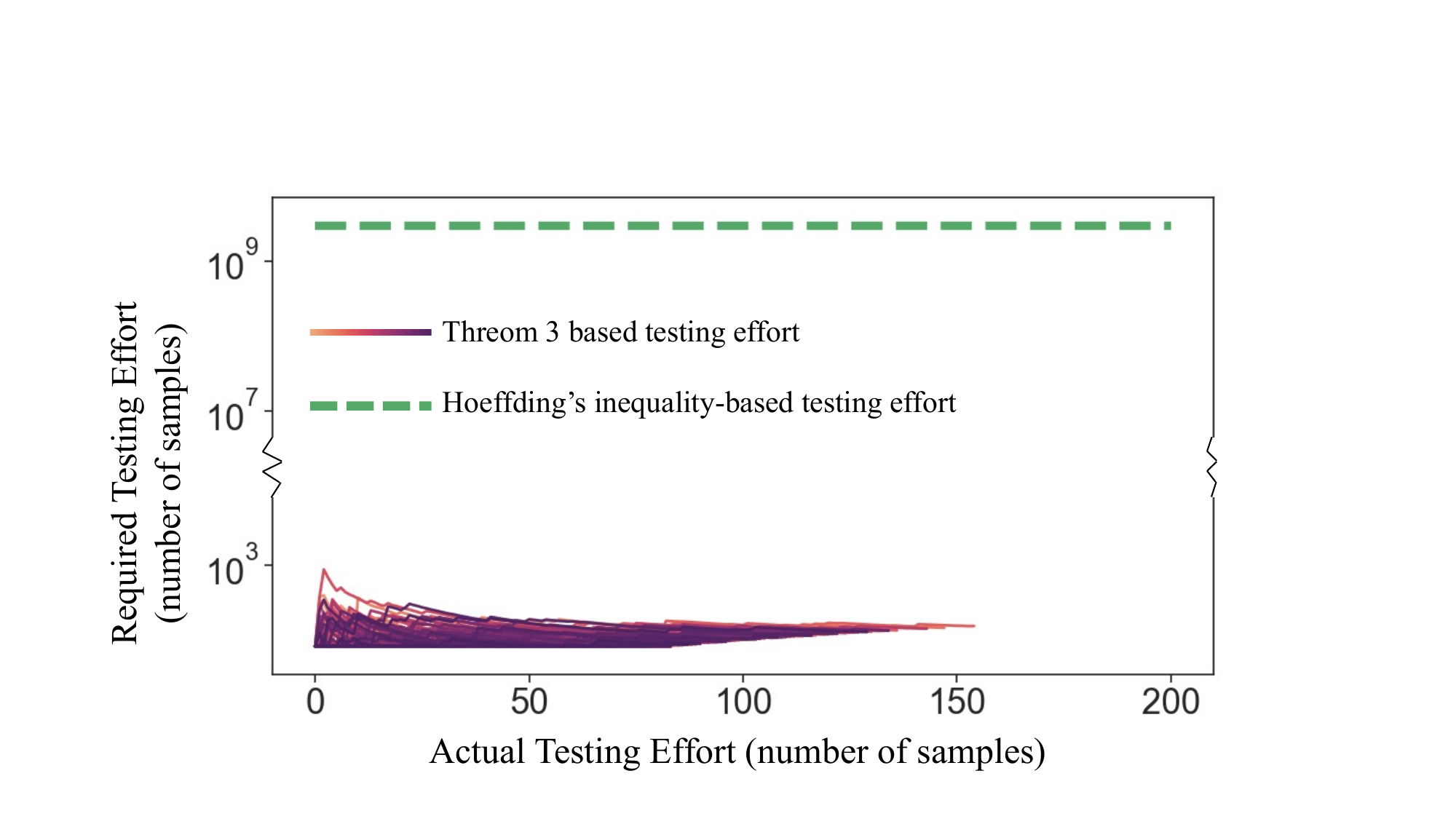}
    \caption{Comparing the required testing effort \eqref{eq:n_bar} by Theorem~\ref{thm:var_rp} (across 100 independent trials) with the one derived from the Hoeffding's inequality of~\eqref{eq:hoeffding}.}
    \label{fig:nade_n}
    \vspace{-7mm}
\end{figure}

\noindent\textbf{Results \& discussions}: Overall, the improvement of the $\alpha$-quantized NADE over the original formulation is immediate and substantial. Risk estimates are tightly clustered around the ground truth value, all falling within the theoretical accuracy bound of $5.14 \times 10^{-9}$. Only 5 out of 100 pairwise trials resulted in non-repeatable estimates, aligning closely with the theoretical bound set by $\beta = 0.1$, which guarantees at least 90\% repeatability.

More importantly, the performance gains from variance reduction and the associated acceleration of testing as originally claimed by the NADE authors are fully leveraged in our framework via Theorem~\ref{thm:var_rp}. The most sample-intensive trial under our proposed method required approximately 600 times fewer samples than the worst-case original NADE trial, and about 80 times fewer than the average testing effort observed in those trials (and about 1 million times fewer than the testing effort claimed to get the $r^*$ approximation in NDE by Feng et al.~\cite{feng2020testing,nadegithub}). Moreover, as illustrated in Fig.~\ref{fig:nade_n}, the testing effort required under Theorem~\ref{thm:var_rp}—which leverages the empirical Bernstein inequality and adapts to online variance estimates—clearly benefits from the substantial variance reduction enabled by NADE. At comparable stages of testing progress, the sample requirement under the proposed method is consistently over one million times smaller than that prescribed by Hoeffding's inequality when using a constant, conservative over-approximation of variance.

Regarding accuracy, although the relative half-width (RHW) criterion used in the original NADE study is not directly comparable to the $\gamma$-based accuracy adopted in our method, some meaningful parallels can be drawn. The original trials employed an RHW threshold of 0.03, which corresponds to an absolute tolerance of approximately $5 \times 10^{-9}$ based on the average risk estimate, with a 95\% confidence level. In contrast, our method achieves a similar level of accuracy ($\gamma + \frac{\alpha}{2} \approx 5.14 \times 10^{-9}$), but with a higher confidence level of 99\%, drastically fewer samples, and significantly improved repeatability.

Finally, we note that both the original NADE and our proposed $\alpha$-quantized modification were executed in simulation. However, given NADE’s configuration, where each test involves approximately 400 meters of subject vehicle travel, and the substantial efficiency gains achieved through our method, the resulting test effort becomes nearly comparable to that of real-vehicle evaluations conducted under existing standardized procedures for Automated Driving Assistance Systems (ADAS). For example, the EURO New Car Assessment Program (NCAP) AEB Car-to-Car Test Protocol~\cite{euroncap2023aeb} specifies 134 test scenarios for validating Crash Imminent Braking (CIB) and Automatic Emergency Braking (AEB) systems, with each scenario involving a comparable travel distance to that used in NADE. 

It is important to highlight that the standard procedures mentioned above focus on much simpler subsystems—specific ADAS features such as AEB, whereas NADE targets a significantly more advanced level of vehicle autonomy. This suggests that our proposed approach has the potential to elevate NADE into a practical framework for standardized, regulatory-grade evaluation of autonomous driving systems. By significantly enhancing its repeatability and accuracy—while preserving its sampling efficiency—our $\alpha$-quantized extension moves NADE closer to being a viable candidate for formal adoption in safety validation pipelines, akin to those used for ADAS features under programs such as EURO NCAP and others~\cite{rao2019tja}. While fully adopting such standardized protocols represents a valuable direction for future research, potentially involving not only algorithmic advancements but also enhancements to testing apparatus and experimental procedures, it is beyond the scope of the current study.

\subsection{Humanoid robot locomotive tracking performance tests}\label{sec:exp-loco}
In this final case study, we consider a testing scenario that, to the best of our knowledge, currently lacks established testing standards such as ISO 9283, as well as widely recognized research benchmarks like NADE. This scenario reflects a common situation encountered during the early stages of developing standardized tests, characterized by limited prior knowledge of the system, sparse data availability, and incomplete procedural frameworks. Additionally, we use this case to highlight a key advantage of our proposed method—its ability to support the utilization of non-identical testing algorithms. Specifically, our framework accommodates scenarios involving adaptive algorithms or heterogeneous implementations, enabling repeatable evaluation even when the algorithms under comparison are not strictly identical. 

\noindent\textbf{Testing configurations \& execution}: Consider a Unitree G1 humanoid robot equipped with a locomotion policy trained via reinforcement learning (RL), using the implementation provided in~\cite{unitreegithub}. The policy accepts commanded linear velocities defined at the center of pelvis in the forward-backward ($x$) and lateral ($y$) directions, along with a desired yaw rate, forming the input command vector $\s^d = [v_x^d, v_y^d, \theta^d]$. In this study, the admissible command space is parameterized as $v_x^d \in [-0.3, 0.3]$ m/s, $v_y^d \in [-0.3, 0.3]$ m/s, and $\theta^d \in [-0.3, 0.3]$ rad/s.

However, due to well-documented challenges associated with odometry on legged robotic platforms~\cite{Hartley-RSS-18,yang2023multi}, accurate tracking of commanded inputs using proprioceptive sensors alone (i.e., without vision-based sensing) is frequently imperfect. Specifically for this study, the locomotion policy under evaluation, following the setup from~\cite{unitreegithub}, does not use any odometry information (e.g., linear position or velocity of the pelvis) as part of its input. During training in simulation via a RL framework, such privileged state information is provided only to the reward function to encourage the policy to follow commanded states. However, tracking behavior is never explicitly reinforced through direct feedback mechanisms. This design choice results in various deficiencies, as also discussed in~\cite{khor2025post}. Moreover, the robot's tracking performance is not uniformly consistent across the command space. It is intuitively expected that commands of larger magnitude will generally result in lower tracking accuracy. However, it remains unclear whether this correlation is linear or to what extent the monotonic relationship holds.

In each test, the robot is initialized with zero velocity commands and remains in-place locomotion in a dwell state for at least 3 seconds. Subsequently, a randomly sampled command $\s^d$, drawn from the admissible set, is applied and maintained constant for the remainder of the test. A trajectory is then recorded over a minimum duration of 5 seconds at 50 Hz. The final 3 seconds of this data (i.e., the last 150 time steps) are extracted as the stabilized observed state trajectory, denoted $T^o =\{\s^o_i\}_{i=1}^{150}$.

To evaluate tracking performance, we compare the observed state trajectory $T^o$ with the commanded trajectory $T^d$ (which consists of repeated instances of $\s^d$), using a soft L-infinity exponential loss defined as:
\begin{equation}\label{eq:l-inf-loss}
\psi(T^o; T^d) = 1 - \exp\left(-6 \sum_{i=1}^{150} \|\s^o_i - \s^d\|^2\right),
\end{equation}
which yields a bounded error in the interval $[0,1]$. Note that the formulation in \eqref{eq:l-inf-loss} is specifically adopted and parameterized to emphasize larger errors, avoiding excessive sensitivity to smaller errors through its bounded nature. We found this particularly appropriate for legged locomotion tracking tasks, as minor errors are inherently present due to the periodic stepping patterns and the articulated, limb-based structure characteristic of humanoid robots and most legged robotic systems. 

In any standardized testing context, this distribution should be selected deliberately and justified clearly. When detailed operational domain knowledge is available (such as extensive empirical data on human driver behavior), one can leverage such prior information to specify an informed target distribution (as exemplified by NADE~\cite{feng2021intelligent}, which selects NDE as references to construct both the target distribution and the performance reference $r^*$). Unfortunately, this is not applicable in the current example due to limited domain-specific data. Therefore, identifying a more representative and practically meaningful distribution remains an important direction for future research.

\begin{figure}
    \vspace{1mm}
    \centering
    \includegraphics[trim={1cm 4cm 14cm 3cm}, clip,width=0.9\linewidth]{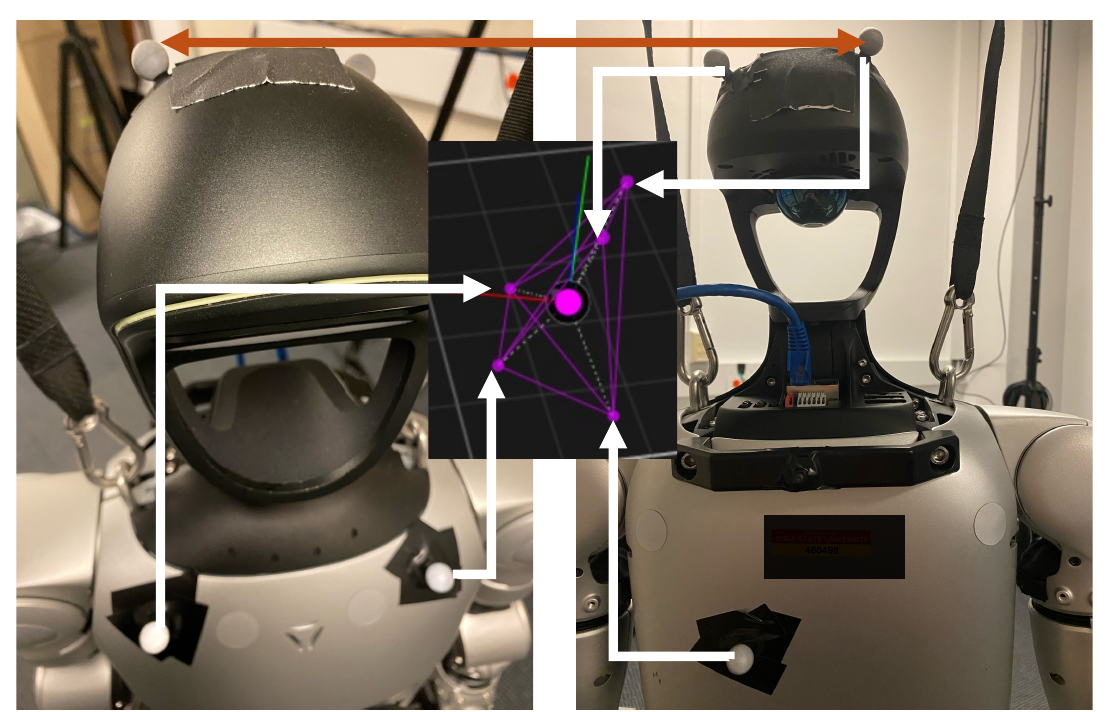}
    \caption{The tracker installation on the G1 robot used for the real-world evaluation of tracking performance.}
    \label{fig:g1_trackers}
    \vspace{-5mm}
\end{figure}

\begin{figure*}
    \vspace{2mm}
    \centering
    \begin{subfigure}{.49\textwidth}
      \centering
      \includegraphics[trim={3cm 0cm 4cm 2cm}, clip, width=0.99\textwidth]{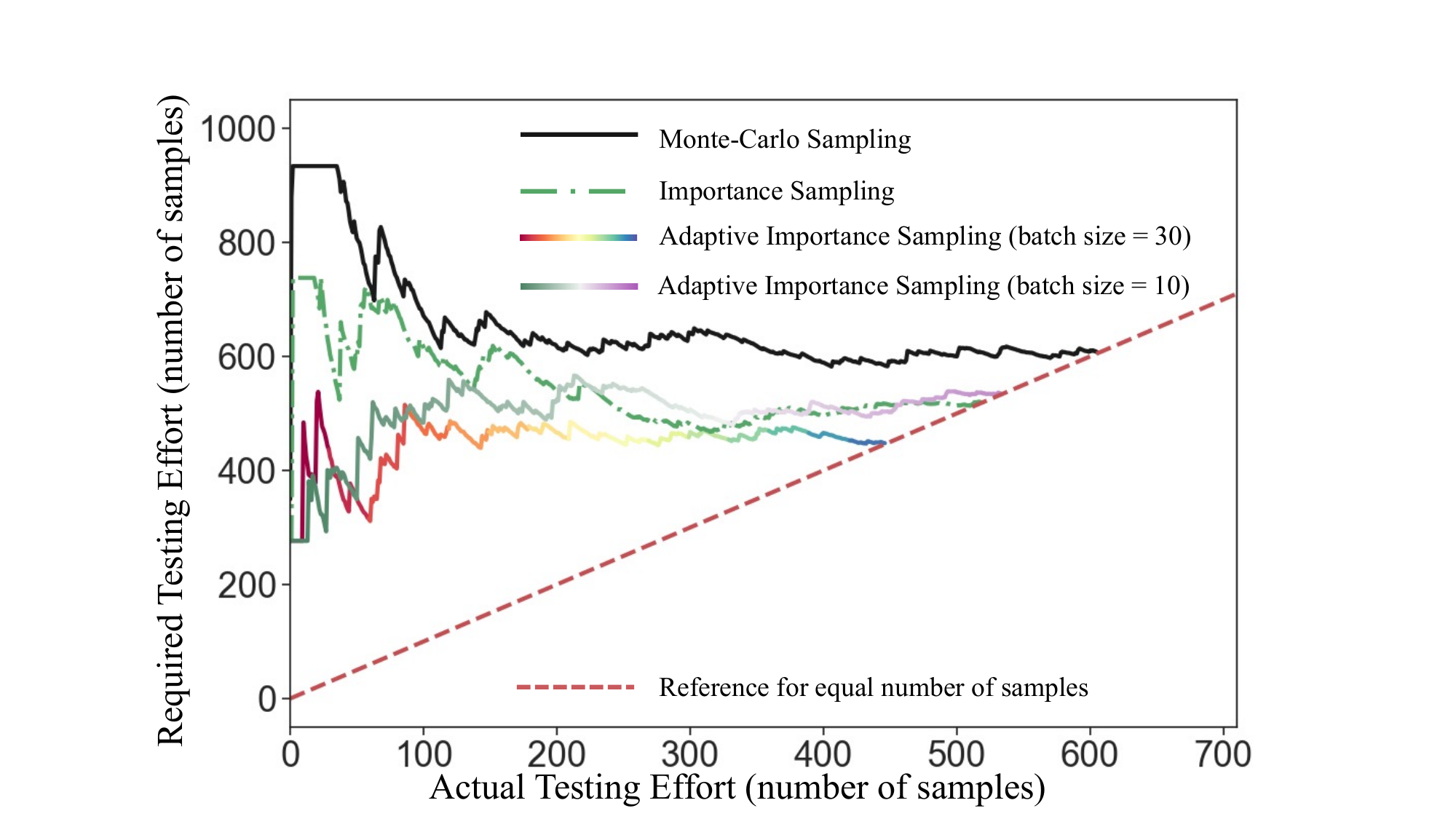}
      \caption{MuJoCo simulation-based tests.}
      \label{fig:g1_sim}
    \end{subfigure}%
    \hfill
    \begin{subfigure}{.49\textwidth}
      \centering
      \includegraphics[trim={3cm 0cm 4cm 1.5cm},clip,width=.99\textwidth]{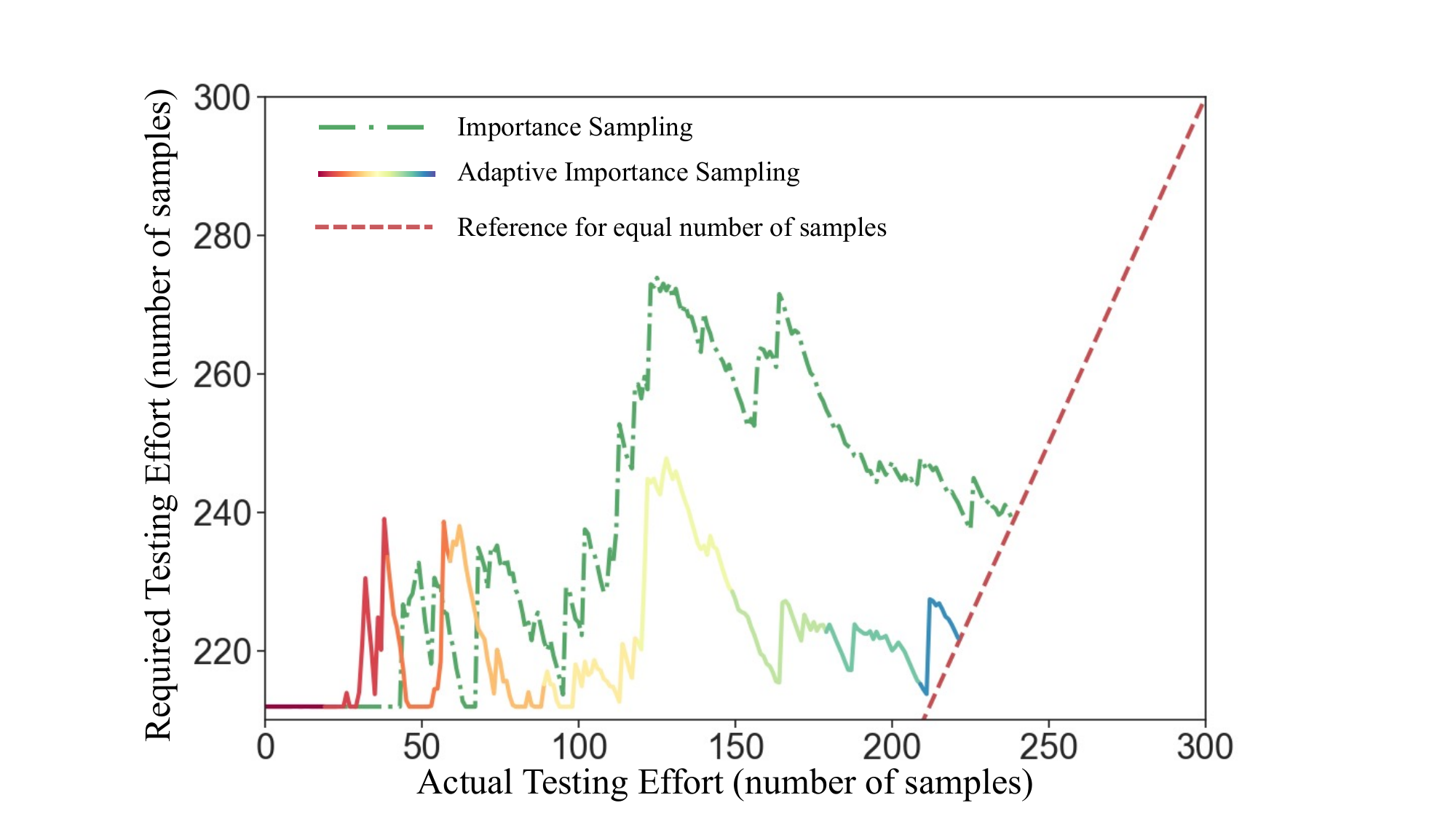}
      \caption{Real-world in-lab tests.}
      \label{fig:g1_real}
    \end{subfigure}%
    \hfill
    \caption{Comparing the testing efforts for the G1 locomotive performance estimate among trials of different $\alpha$-quantized Algorithm~\ref{alg:is_testing} variants in MuJoCo simulator (the original subject robot policy was trained in Issac Gym, but is tested in MuJoCo) and in real-world. For the AIS algorithms shown in both subfigures, batch transitions are visually indicated using gradually shifting color gradients, each following a distinct color palette.}
    \label{fig:g1_loco}
    \vspace{-1mm}
\end{figure*}

We start with a baseline implementation following the Monte Carlo sampling strategy by $p$. We consider two additional variants. The first is an importance sampling variant using a fixed proposal distribution $q$ parameterized as a Beta distribution~\cite{gupta2004handbook}, whose probability density function (PDF) over the bounded interval $[0,1]$ is
\begin{equation}
    f_{B}(x; a, b) = \frac{1}{B(a,b)}x^{a-1}(1-x)^{b-1}
\end{equation}
and $B(a,b)$ is the Beta function defined as
\begin{equation}
    B(a,b) = \int_0^1 t^{a-1} (1-t)^{b-1} dt
\end{equation}
The PDF can be extended to any bounded interval via a linear transformation. For the aforementioned command set, we employ three such Beta distributions—one for each command dimension. The distribution parameters were fitted using Monte Carlo samples collected from 300 simulation-based tests. Due to the high cost and practical difficulty of performing sufficient Monte Carlo sampling in real-world settings, these fitted parameters were reused in the real-world tests without re-fitting. The resulting shape parameters are $\textbf{a} = [0.83, 0.73, 0.77]$ and $\textbf{b} = [0.79, 0.75, 0.76]$. Each distribution is U-shaped, with higher density near the boundaries and lower density near the center, supporting our earlier conjecture that the distribution is not uniform over the command space.

The second variant is an AIS (adaptive importance sampling) algorithm. The proposal distribution is a mixture: with probability $0.1$, samples are drawn from the original distribution $p$; with probability $0.9$, they are drawn from a Beta-distribution-based proposal $q$. Such a mixed-sampling strategy is commonly employed in importance sampling methods to balance exploration and exploitation, thereby avoiding premature convergence or excessive bias toward high-probability regions~\cite{feng2020testing,zhao2016accelerated}. The parameters of $q$ are updated every $d$ steps by fitting a Beta distribution to the most recent batch of $d$ collected samples. Specifically, let $(a, b)$ denote the current shape parameters of the Beta distribution, and let $(a', b')$ be the parameters fitted to the latest batch of $d$ samples. The update rule follows an exponential moving average:
\begin{equation}
    a \leftarrow (1 - l_r)\,a + l_r\,a', \quad
b \leftarrow (1 - l_r)\,b + l_r\,b',
\end{equation}
where $l_r = 0.1$ is the learning rate. The initial proposal $q$ is parameterized with shape values $a = b = 0.99$ for all entries (to emulate an almost uniform distribution in the bounded interval).

We execute the aforementioned algorithmic variants across multiple trials in both simulation and real-world settings. For the real-world tests, we employ the OptiTrack motion capture system (tracker configuration shown in Fig.~\ref{fig:g1_trackers}) equipped with four Prime$^{\text{x}}$22 high-speed cameras to measure the robot's actual states, with a rated positional accuracy of 0.15 mm. Five motion trackers are mounted on the robot’s head and torso, which together form a rigid body throughout all tests. The geometric center of the polyhedron defined by these five trackers is calibrated to align with a reference point on the robot's pelvis, ensuring correspondence with the commanded state of the G1 robot. The raw position readings are acquired at approximately 125 Hz, and velocity is computed through direct finite differencing between consecutive position measurements. No filtering or smoothing operations are applied beyond this basic numerical differentiation.

Across all tests, we use a consistent set of hyperparameters: $\gamma = 0.04$, $c = 0.05$, $\beta = 0.1$, and $m\bar{\omega} = 0.7$. These parameters are chosen not because they reflect the theoretical ideal, but because more stringent settings would require an impractically large number of tests in real-world conditions. Practical constraints—such as motion capture system setup and calibration, robot warm-up, resets between trials, and ensuring the battery remains above 20\%, among many others—limit the number of feasible test executions. These considerations emphasize the necessity of sample-efficient testing algorithms, particularly in real-world deployments.

\noindent\textbf{Results \& discussions}: Fig.~\ref{fig:g1_sim} presents the results of four different variants of the $\alpha$-quantized Algorithm\ref{alg:is_testing}, each employing a distinct mechanism for exploring the commanded states during testing. These include: (i) standard Monte Carlo sampling, (ii) importance sampling (using the aforementioned configurations and hyperparameters), (iii) AIS with batch size $d = 30$, and (iv) AIS with batch size $d = 10$. Each algorithm was executed for 10 independent trials. Despite differences in sampling behavior, all 40 trials converged to the same final tracking performance estimate of approximately $0.47$. However, as shown in Fig.~\ref{fig:g1_sim}, the number of required samples varied significantly: Monte Carlo sampling used up to 650 tests, while the most efficient AIS variant ($d = 30$) achieved the same estimate with only 442 tests.

In the real-world experiments, we employed two testing variants: (i) importance sampling and (ii) AIS with a batch size of 30. Every effort was made to ensure consistency in testing conditions across both trials. The two tests were conducted on the same subject robot, at the same location, spaced seven days apart. The importance sampling trial spanned three days and concluded after 240 tests. The AIS trial was completed in a single day and terminated after 222 tests. The motion capture system was re-calibrated prior to each trial, and additionally once during the AIS trial (no mid-test calibration was performed during the importance sampling trial). Despite these differences, both trials yielded the same final tracking performance estimate of~$0.39$. 

According to Theorem~\ref{thm:var_rp} and the specified parameters, both the simulation and real-world estimates are guaranteed to lie within $0.078$ of the ground-truth value with 95\% confidence and are repeatable with 90\% probability. Notably, the same locomotion algorithm exhibits a clear discrepancy in tracking performance between the simulator and the real world. The proposed testing framework offers a principled and parameterized way to detect and quantify such discrepancies, complete with formal guarantees. While explaining the root causes of this sim-to-real gap is beyond the scope of this paper, our results demonstrate the practical utility of the proposed method in reliably identifying such differences.

\section{Conclusion and Future Work}

In this paper, we presented a principled framework for achieving guaranteed repeatable, accurate, and efficient evaluation of robot performance within the class of statistical query-based testing algorithms. We demonstrated the method’s advantages and unique capabilities across a diverse set of applications, highlighting its robustness, generality, and practical relevance in both simulated and real-world testing scenarios.

We believe the proposed framework holds significant practical value beyond the experiments presented in Section~\ref{sec:exp}. Its parameterized structure and formal guarantees make it well-suited for rigorously comparing robotic systems under shared operational conditions or for validating a given system against predefined performance thresholds. Furthermore, the method’s flexibility in the choice of testing algorithm, combined with its natural compatibility with performance drift and time-varying behaviors, makes it particularly applicable to real-world testing in standardized environments involving multiple stakeholders. Beyond robotics, the framework also has promising potential in other domains—such as repeatable assessment of human behavior, biostatistical studies, and broader empirical sciences—where controlled, reliable, and interpretable evaluation processes are critical.

On the other hand, several challenges and directions for future exploration remain. For instance, the bounds established in Theorems~\ref{thm:quant-acc} and~\ref{thm:quant-rep} are derived under worst-case assumptions. While this approach is advantageous in that it avoids imposing strong distributional assumptions, it can also lead to conservative estimates. This limitation is evident empirically: as shown in the results of Section~\ref{sec:exp}, there is often only a marginal gap between the theoretical repeatability guarantee (based on $\beta$) and the empirically observed repeatability probability. This suggests room for refining the bounds through distribution-aware or data-adaptive techniques.

\appendices

\section{}\label{apx:notation}
\textbf{Notation: } The set of real and positive real numbers are denoted by $\R$ and $\R_{>0}$ respectively. $\Z$ denotes the set of positive integers. $\norm{\cdot}$ is the $\ell_2$-norm.

\nomenclature{$\s$}{Testing system state vector}
\nomenclature{$S$}{Testing system state space}
\nomenclature{$s$}{Testing system state space dimension}
\nomenclature{$\uu$}{Controllable actions of testing}
\nomenclature{$U$}{Testing action space}
\nomenclature{$u$}{Testing system action space dimension}
\nomenclature{$\omega \in W$}{Disturbance and uncertainty term}
\nomenclature{$f(\cdot)$}{Testing system dynamics function}
\nomenclature{$\s(0)$}{Initial testing state vector}
\nomenclature{$p_s$}{Probability distribution of states}
\nomenclature{$T$}{State trajectory collected during testing}
\nomenclature{$\xi$}{Length (time steps) of a test trajectory}
\nomenclature{$\bar{\uu}$}{Finite sequence of testing actions}
\nomenclature{$p_u$}{Probability distribution over actions (for open-loop testing)}
\nomenclature{$\pi$}{Feedback testing policy}
\nomenclature{$x$}{Composite vector $(\s_0, \bar{\uu})$, test cases}
\nomenclature{$p$}{Target probability distribution of test cases}
\nomenclature{$q$}{Proposal (importance) distribution of test cases}
\nomenclature{$\psi(\cdot)$}{Performance evaluation function}
\nomenclature{$\mathcal{M}$}{Bounded interval of possible performance measure values}
\nomenclature{$\overline{m}$}{Upper bound of possible performance measure values}
\nomenclature{$\underline{m}$}{Lower bound of possible performance measure values}
\nomenclature{$m$}{Range of the performance evaluation function}
\nomenclature{$r^*$}{Ground-truth performance measure}
\nomenclature{$r_n$}{Empirical performance estimate after $n$ tests}
\nomenclature{$x_i$}{Sample drawn from distribution $q$ at iteration $i$}
\nomenclature{$q_i$}{Adaptive proposal distribution at iteration $i$}
\nomenclature{$w_i(x)$}{Importance weight for sample $x$ at iteration $i$}
\nomenclature{$\overline{w}$}{Upper bound for the importance weight}
\nomenclature{$\mathcal{T}$}{Termination condition function for the testing algorithm}
\nomenclature{$\mathbb{B}$}{A Boolean set of $\{0,1\}$}
\nomenclature{$\mathcal{TE}$}{A statistical query-type performance testing algorithm}
\nomenclature{$\gamma$}{Accuracy tolerance}
\nomenclature{$c$}{Confidence level parameter for accuracy ($1-c$ for confidence level})
\nomenclature{$\beta$}{Confidence level parameter for accuracy repeatability}
\nomenclature{$\alpha$}{Quantization parameter for partitioning outcome space}
\nomenclature{$\hat{\sigma}_n$}{Empirical estimate of variance after $n$ samples}
\nomenclature{$v^d_x$}{The commanded longitudinal velocity from Section~\ref{sec:exp-loco}}
\nomenclature{$v^d_y$}{The commanded lateral velocity from Section~\ref{sec:exp-loco}}
\nomenclature{$\theta^d$}{The commanded yaw rate from Section~\ref{sec:exp-loco}}
\nomenclature{$\s^d$}{The commanded state from Section~\ref{sec:exp-loco}}
\nomenclature{$\s^o_i$}{The observed state at the $i$-th step from Section~\ref{sec:exp-loco}}
\nomenclature{$T^d$}{The commanded trajectory of states from Section~\ref{sec:exp-loco}}
\nomenclature{$T^o$}{The observed trajectory of states from Section~\ref{sec:exp-loco}}
\nomenclature{$a, b$}{Coefficients of the Beta-distribution}
\nomenclature{$f_B$}{The probability density function of Beta distribution}
\nomenclature{$B$}{The Beta function}
\nomenclature{$l_r$}{Learning rate for adaptive importance sampling}

\printnomenclature

\section{Proof of Theorem~\ref{thm:quant-rep}}\label{apx:thm2}
\begin{proof}
    To achieve $\beta$-repeatability, by \eqref{eq:beta-2}, we require
    \[
    (1-c)^2 \frac{4\gamma\alpha - \alpha^2}{4\gamma^2} \geq 1 - \beta.
    \]
    
    Multiplying both sides by $4\gamma^2$ and rearranging terms, we obtain
    \[
    (1-c)^2(4\gamma\alpha - \alpha^2) \geq 4\gamma^2(1 - \beta).
    \]
    
    The discriminant of this quadratic on L.H.S. must be non-negative, i.e.,
    \[
    \Delta = \bigl(-4\gamma(1-c)^2\bigr)^2 - 4(1-c)^2\cdot4\gamma^2(1-\beta) \geq 0.
    \]
    
    Simplifying the discriminant, we get $16\gamma^2(1-c)^4 - 16\gamma^2(1-c)^2(1-\beta)\geq0$. Dividing by $16\gamma^2(1-c)^2$ (noting these terms are positive), we obtain $\quad (1-c)^2\geq (1-\beta)$, matching precisely the stated requirement in Theorem~\ref{thm:quant-rep}.
    
    Thus, provided $(1-c)^2\geq (1-\beta)$, we can explicitly solve for $\alpha$ by applying the quadratic formula as
    \[
    \alpha = \frac{4\gamma(1-c)^2 \pm \sqrt{\Delta}}{2(1-c)^2}
    = 2\gamma\frac{(1-c)\pm\sqrt{(1-c)^2-(1-\beta)}}{(1-c)}.
    \]
    Choosing the smaller solution (since smaller $\alpha$ yields better accuracy and any $\alpha$ value in between the two solutions guarantees repeatability), we obtain exactly the form in \eqref{eq:alpha}, thus proving Theorem~\ref{thm:quant-rep}.
\end{proof}

\section{A Lyapunov perspective of the $\gamma$-accuracy of Algorithm~\ref{alg:is_testing}}\label{apx:lya}
\begin{theorem}(Adapted from \cite[Section~5.4.1, pp145--146]{kushner2003stochastic}: convergence w.p.1 as in Theorem~4.1, and drift form as in the “Comment on the proof”.)
\label{thm:lyapunov}
    For a  Discrete-Time Stochastic Dynamic System of the form
    \begin{equation}
        x_{n+1} = x_n + \alpha_n H(x_n,\xi_{n+1}), n=0,1,2,\ldots,
    \end{equation}
    $\{\xi_n\}$  is a sequence of random disturbances adapted to the filtration $\{\mathcal{F}_n\}$ with $x_n$ being $\mathcal{F}_n$-measurable.
    Let $\alpha_n$ be deterministic step-sizes satisfying the Robbins–Monro conditions:
    \begin{equation}\label{eq:robbins-monro}
        \alpha_n > 0, \sum_0^{\infty} \alpha_n = \infty, \sum_0^{\infty} \alpha_n^2 < \infty.
    \end{equation}
    If there exists a measurable function $f:\R^d \rightarrow \R^d$ with a fixed equilibrium $x^*$, such that
    \begin{equation}
        \mathbb{E}[H(x_n,\xi_{n+1})\mid \mathcal{F}_n, x_n] = f(x_n), f(x^*)=0
    \end{equation}
    with bounded second moment:
    \begin{equation}
        \sup_x \mathbb{E}[\norm{H(x,\xi_{n+1})}^2\mid x_n=x] < \infty.
    \end{equation}
    Assume there exists a (continuously differentiable) Lyapunov function $V:\mathbb{R}^d\!\to\![0,\infty)$ satisfying the following conditions:
    \begin{itemize}
        \item $V(x^*) = 0$ and $V(x) > 0$ for all $x \neq x^*$; moreover, $V(x)$ is radially unbounded, i.e., $V(x)\to\infty$ as $\|x\|\to\infty$.
        \item There exist constants $w>0$ and $C<\infty$ such that, for all $x$ and $n$,
        \begin{equation}\label{eq:lya-drift}
            \begin{aligned}
                \mathbb{E}[V(x_{n+1}) & - V(x_n) \mid x_n = x] \\ &
            \leq -\,\alpha_n\, w\, \phi(\|x - x^*\|) + C \alpha_n^2,
            \end{aligned}
        \end{equation}
        where $\phi:[0,\infty)\!\to\![0,\infty)$ is nondecreasing with $\phi(0)=0$ and $\phi(r)>0$ for all $r>0$.
    \end{itemize}
    Then, $x_n \to x^*$ almost surely, which certifies the stability of $x^*$.
\end{theorem}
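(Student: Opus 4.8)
The plan is to reduce the almost-sure convergence claim to the Robbins--Siegmund almost-supermartingale convergence theorem, which states that if nonnegative adapted sequences $\{Z_n\},\{b_n\},\{c_n\},\{d_n\}$ satisfy $\mathbb{E}[Z_{n+1}\mid\mathcal{F}_n]\le Z_n(1+b_n)+c_n-d_n$ with $\sum_n b_n<\infty$ and $\sum_n c_n<\infty$ almost surely, then $Z_n$ converges almost surely to a finite limit and $\sum_n d_n<\infty$ almost surely. First I would set $Z_n=V(x_n)$ and read off from the drift inequality~\eqref{eq:lya-drift} the identification $b_n=0$, $c_n=C\alpha_n^2$, and $d_n=\alpha_n\,w\,\phi(\norm{x_n-x^*})$; here the bounded-second-moment hypothesis guarantees that each $V(x_{n+1})$ is integrable, so the conditional expectations are well defined. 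Since the Robbins--Monro condition~\eqref{eq:robbins-monro} gives $\sum_n c_n = C\sum_n\alpha_n^2<\infty$, the theorem applies directly.

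Applying it yields two conclusions I would exploit in tandem: (i) $V(x_n)$ converges almost surely to some finite nonnegative random variable $V_\infty$, and (ii) $\sum_n \alpha_n\,w\,\phi(\norm{x_n-x^*})<\infty$ almost surely. Combining (ii) with the divergence $\sum_n\alpha_n=\infty$ forces $\liminf_n \phi(\norm{x_n-x^*})=0$: otherwise $\phi(\norm{x_n-x^*})\ge\delta>0$ for all large $n$ would make the series in (ii) diverge. Because $\phi$ is nondecreasing with $\phi(r)>0$ for $r>0$, the relation $\phi(\norm{x_{n_k}-x^*})\to0$ along a subsequence implies $\norm{x_{n_k}-x^*}\to0$, so $x_{n_k}\to x^*$. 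Continuity and positive-definiteness of $V$ then give $V(x_{n_k})\to V(x^*)=0$, and matching this against (i) pins down $V_\infty=0$; hence $V(x_n)\to0$ almost surely along the full sequence.

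Finally I would upgrade $V(x_n)\to 0$ to $x_n\to x^*$ using radial unboundedness. Suppose, for contradiction, that $\norm{x_n-x^*}\ge\varepsilon$ along some subsequence for a fixed $\varepsilon>0$. Radial unboundedness together with the almost-sure boundedness of $\{V(x_n)\}$ (a byproduct of conclusion (i)) confines this subsequence to a bounded set, so a further subsequence converges to a limit $y$ with $\norm{y-x^*}\ge\varepsilon$, whence $V(y)>0$ by positive-definiteness; continuity then forces $V(x_n)\to V(y)>0$ along it, contradicting $V(x_n)\to0$. Therefore $x_n\to x^*$ almost surely, certifying the stability of $x^*$.

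I expect the main obstacle to be the passage from the two Robbins--Siegmund conclusions to pointwise convergence: care is needed to argue the vanishing $\liminf$ using only monotonicity of $\phi$ rather than continuity, and to rule out the recurrent-but-nonconvergent scenario in which $\norm{x_n-x^*}$ repeatedly returns near zero yet wanders away infinitely often. The radial-unboundedness and tightness argument in the last paragraph is precisely what closes this gap, and verifying that $\{V(x_n)\}$ stays almost surely bounded is what legitimizes the extraction of the convergent subsequences throughout.
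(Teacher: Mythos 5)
Your argument is correct, but it is worth noting that the paper does not actually prove Theorem~\ref{thm:lyapunov} at all: it is stated as an adaptation of a result in Kushner and Yin and the appendix proceeds directly to the application (instantiating $x_n=r_n$, $\alpha_n=1/n$, $V(x)=(x-r^*)^2$), deferring the convergence proof to the cited reference, whose treatment runs through the ODE-method and perturbed-Lyapunov supermartingale machinery. Your route is a self-contained and more elementary alternative: you reduce the drift inequality~\eqref{eq:lya-drift} to the Robbins--Siegmund almost-supermartingale lemma with $Z_n=V(x_n)$, $c_n=C\alpha_n^2$ (summable by~\eqref{eq:robbins-monro}) and $d_n=\alpha_n w\,\phi(\norm{x_n-x^*})$, then combine $\sum_n d_n<\infty$ with $\sum_n\alpha_n=\infty$ to force $\liminf_n\phi(\norm{x_n-x^*})=0$, use monotonicity of $\phi$ (correctly avoiding any continuity assumption on $\phi$) to extract a subsequence converging to $x^*$, pin down the Robbins--Siegmund limit $V_\infty=0$, and finally upgrade to full-sequence convergence via radial unboundedness and a compactness contradiction. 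This is the standard modern proof of such results and each step checks out; the one point you gloss over is the integrability of $V(x_{n+1})$, which does not follow from the bounded second moment of $H$ alone for a general Lyapunov function (it is implicitly assumed in the statement of the drift condition~\eqref{eq:lya-drift}, and holds in the paper's application where $V$ is quadratic and the increments are bounded). What the citation-based approach buys the paper is coverage of more general noise structures; what your approach buys is a short, transparent, and verifiable proof under exactly the hypotheses stated.
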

Let $x_n$ and $\alpha_n$ in Theorem~\ref{thm:lyapunov} be $r_n$ and $\frac{1}{n}$ in \eqref{eq:is-dtsd}, respectively. Consider the equilibrium $x^*$ as $r_* = \mathbb{E}_p[\psi(x)]$. We have the random noise function as $H(x_n, \xi_{n+1}) = \psi(x_{n+1})\frac{p(x_{n+1})}{q(x_{n+1})}-x_n$, leading to the standard form of $x_{n+1} = x_n + \alpha_nH(x_n, \xi_{n+1})$. Given $\frac{1}{n}$ clearly satisfies the Robbins–Monro Conditions, and the mean dynamics $f(x_n):= \mathbb{E}[H(x_n, \xi_{n+1})\mid \mathcal{F}_n, x_n] = \mathbb{E}_{q}\Big[ \psi(x_{n+1})\frac{p(x_{n+1})}{q(x_{n+1})}\Big] -x_n= r^*-x_n$, thus $f(r^*) = r^*-r^*=0$. Further more, the bounded second moment condition is satisfied as $\psi(\cdot)$ and $\frac{p(\cdot)}{q(\cdot)}$ are both bounded per statements in Section~\ref{sec:prob}. Consider the Lyapunov function $V(x) = (x-r^*)^2$. It is clear that $V(r^*)=0$, $V(x)>0, \forall x \neq r^*$, and $V(x)$ is radially unbounded since as $|x| \rightarrow \infty$, $(x-r^*)^2 \rightarrow \infty$. Finally, the conditional drift
\begin{equation}
    \begin{aligned}
        \mathbb{E}& [V(x_{n+1})-V(x_n)|x_n=x] \\ & = \mathbb{E}[(x+\alpha_nH(x, \xi_{n+1}))^2-(x-r^*)^2|x_n=x] \\ & = -2\alpha_n(x-r^*)^2 + \alpha_n^2\mathbb{E}[H(x, \xi_{n+1})^2\mid x_n=x] \\ & \leq -2\alpha_n(x-r^*)^2 + C\alpha_n^2
    \end{aligned}
\end{equation}
for some finite constant $C$. Given that $\alpha_n^2=1/n^2$ is summable, we satisfy the Lyapunov-type drift condition required by the theorem. This yields that the estimator $r_n$ converges to $r^*$ almost surely.
\bibliographystyle{IEEEtran}
\bibliography{output}

\end{document}